\documentclass{article}
\PassOptionsToPackage{table}{xcolor}
\usepackage{iclr2027_conference,times}

\usepackage{amsmath,amsfonts,bm}

\def\eqref#1{equation~\ref{#1}}

\def\1{\bm{1}}

\DeclareMathAlphabet{\mathsfit}{\encodingdefault}{\sfdefault}{m}{sl}
\SetMathAlphabet{\mathsfit}{bold}{\encodingdefault}{\sfdefault}{bx}{n}

\usepackage{hyperref}
\usepackage{url}

\title{Q-WAM: 4-Bit Quantization of World Action Models with Action-Subspace Protection}

\iclrfinalcopy
\author{Arash Akbari$^{1*\dagger}$ \quad
Arman Akbari$^{1*}$ \quad
Jingwu Luo$^{1}$ \quad
Yuhao Lei$^{1}$ \quad
Yi Gao$^{1}$ \\
\bf Weiwei Chen$^{2}$ \quad
Xuan Zhang$^{1}$ \quad 
Zhenman Fang$^{4}$ \quad 
Geng Yuan$^{3}$ \quad
Yanzhi Wang$^{1}$ \\
$^{1}$Northeastern University \quad $^{2}$EmbodyX
\quad $^{3}$University of Georgia \\
$^{4}$University of Minnesota - Twin Cities
}

\usepackage{multirow} 
\usepackage{booktabs}
\usepackage{hyperref}
\hypersetup{colorlinks=true,linkcolor=red,citecolor=citeblue,urlcolor=citeblue}
\usepackage{url}
\usepackage{graphicx}
\usepackage{amsmath,amssymb}
\usepackage{amsthm}
\usepackage{booktabs}
\usepackage[table]{xcolor}
\definecolor{oursrow}{HTML}{E4ECF7}
\definecolor{citeblue}{HTML}{3399FF}
\newcommand{\subfigref}[2]{\hyperref[#1]{\ref*{#1}#2}}
\usepackage{enumitem}
\usepackage{wrapfig}
\usepackage{xspace}

\newcommand{\z}{\mathbf{z}}

\newcommand{\actionfun}{\mathcal{A}}
\newcommand{\Gt}{\widetilde{G}}
\newcommand{\Wt}{\widetilde{W}}
\newcommand{\xt}{\tilde{x}}
\DeclareMathOperator{\tr}{tr}
\DeclareMathOperator{\diag}{diag}
\DeclareMathOperator{\spn}{span}

\newtheorem{proposition}{Proposition}

\begin{document}

\maketitle
\lhead{Preprint}
{\renewcommand{\thefootnote}{\fnsymbol{footnote}}
\footnotetext[1]{Equal contribution.}
\footnotetext[2]{Corresponding author: \href{mailto:akbari.ara@northeastern.edu}{\texttt{akbari.ara@northeastern.edu}}.}}

\begin{abstract}
World Action Models (WAMs) jointly generate video and robot actions through iterative diffusion and
perform strongly in robotic manipulation. However, their prohibitive compute and memory costs pose substantial deployment challenges. Post-training quantization (PTQ) can reduce these costs, but existing
PTQ methods such as smoothing and rotation are insufficient to maintain the precision of action generation.  To overcome this limitation, we propose \textbf{Q-WAM}, a new 4-bit weight-activation quantization for WAMs that
preserves the actions the model generates. Specifically, we introduce the \textit{Action
Observability Gramian (AOG)}, which measures how much rounding errors in each weighted combination of a layer's input
channels change the final action through all denoising steps. We also develop
\textit{Action-Subspace Protection (ASP)}, which keeps the few most action-sensitive channel combinations in a tiny 16-bit
low-rank branch and quantizes the complementary weights and activations to 4 bits, both as dense matrix multiplications that run efficiently on GPUs.
Finally, to preserve action quality with minimal overhead, we identify the experts that matter most for the generated action by aggregating the AOG-derived action mass across the layers of each expert and apply ASP only to those experts.
We evaluate Q-WAM on three WAMs, both in simulation and in real-world deployment. On the RoboTwin 2.0
benchmark, it reaches 89.6--93.0\% average success rate, within 1.1 percentage points of the 16-bit models, while reducing the memory of
the quantized blocks by \textbf{3.1--3.4}$\times$. Our method outperforms the strongest baseline, SVDQuant, by
2.5--8.7 percentage points. On a Unitree G1 humanoid and a bimanual UR3 robot, it improves success over SVDQuant by 12.8--17.6 percentage points.
\end{abstract}

\section{Introduction}
\label{introduction}

World Action Models (WAMs) couple action generation with prediction of future visual states to learn manipulation policies~\citep{lingbot-va,dreamzero,fastwam}. Despite strong task performance, their large backbones and iterative denoising make deployment costly. Recent efforts accelerate inference by removing test-time future generation~\citep{fastwam}, simplifying action modules~\citep{fasterwam}, and distilling denoising steps~\citep{flashwam,dido}. Memory demand remains a deployment bottleneck; for example, running ImageWAM~\citep{imagewam} in bf16 requires 19.9 GB of GPU memory. Post-training quantization complements these advances by reducing weight and activation precision without retraining the full model. Achieving these savings requires keeping the generated actions accurate for reliable closed-loop control.

Mixed-precision quantization balances compression and accuracy by keeping sensitive components at higher precision and quantizing the rest more aggressively. Existing VLA methods allocate precision across weight tensors or individual channels~\citep{actquant,qvla}. SVDQuant~\citep{svdquant} instead keeps a small low-rank part of each layer in 16 bits. This moves activation outliers into the weights and stores the dominant components of each weight in a 16-bit low-rank branch, so that only the residual is quantized to 4 bits. This works well for text-to-image diffusion models, but like most post-training methods, SVDQuant decides what to protect by reducing the quantization error of each linear layer's output on its own.
In a WAM, however, what matters is the generated action, and a layer's own output error says little about how much the action changes once that error passes through the rest of the network and the remaining denoising steps. Because visual world modeling and action generation are tightly coupled through iterative denoising, choosing what to keep in higher precision requires tracing how quantization errors propagate through the full action-generation process. Our analysis shows that this downstream action sensitivity concentrates in compact activation subspaces, motivating the central question: \emph{which activation directions should retain higher precision to keep the generated action accurate?}

A natural first question is whether reducing activation outliers suffices to preserve action fidelity. Smoothing~\citep{smoothquant} and rotation~\citep{quarot2024} reduce these outliers, but rounding the resulting activations to 4 bits still introduces considerable error, leaving existing quantization methods unsuitable for WAMs. Figure~\ref{fig:motivation} illustrates the distinction between activation magnitude and action sensitivity: smoothing and rotation make activation magnitudes more uniform (Figure~\subfigref{fig:motivation}{b}), yet the resulting quantization errors can still distort the final action along a few sensitive directions (Figure~\subfigref{fig:motivation}{c}). In the examined layers, a compact subspace accounts for much of the final-action sensitivity, motivating selective protection of these directions (Figure~\subfigref{fig:motivation}{d}). This suggests keeping the action-sensitive subspace in higher precision and the complementary computation in W4A4.

\begin{figure}[t]\centering
\includegraphics[width=\textwidth]{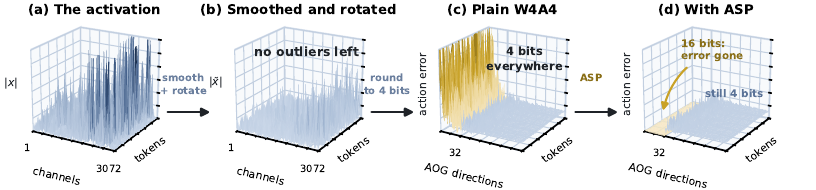}
\caption{\textbf{Removing outliers is not enough.} \textbf{(a)} The activation entering a layer has a
few outliers. \textbf{(b)} Smoothing and rotation remove them. \textbf{(c)} Quantizing
the layer still damages the action, through a few AOG directions. \textbf{(d)} Our proposed ASP protects those
directions and the damage is gone. }
\label{fig:motivation}
\end{figure}

Building on these observations, we propose \textbf{Q-WAM}, a post-training quantization framework for W4A4 world-action models. Q-WAM estimates an Action Observability Gramian (AOG), a per-layer sensitivity matrix measuring how activation errors affect the final action, using label-free randomized probes through the unrolled denoising process during calibration. Furthermore, we introduce Action-Subspace Protection (ASP), which splits each protected linear layer into two parallel computations whose outputs are summed. A low-dimensional branch projects activations onto the AOG's most action-sensitive directions and applies the corresponding projected weights in 16-bit precision, while the complementary computation uses 4-bit weights and activations. Since the selected subspace is fixed during inference, the 16-bit branch is small and adds negligible computational overhead. For a fixed protection rank, we show that the selected subspace minimizes a local approximation to action distortion when activation rounding errors have equal variance in all directions. For WAMs with Mixture-of-Transformers architecture, Q-WAM aggregates AOG sensitivity scores to concentrate protection on experts with high estimated action sensitivity.

We evaluate Q-WAM on three WAMs across 50 RoboTwin 2.0 tasks~\citep{robotwin} and five real-world tasks on bimanual UR3 arms and a Unitree G1 humanoid. In simulation, Q-WAM limits the success rate drop from bf16 to at most 1.03 points while reducing targeted-block memory by 68--71\%, and on physical robots it improves success over SVDQuant, the strongest prior method, by 12.8--17.6 points. Our contributions are as follows:

\begin{itemize}
    \item We introduce the Action Observability Gramian (AOG), a per-layer matrix predicting how much a rounding error in each weighted combination of the layer's input channels changes the final action. We estimate it label-free with random probes.

    \item We propose Action-Subspace Protection (ASP), which keeps the most action-sensitive channel combinations of each layer in a small 16-bit branch and quantizes the rest to W4A4. Moreover, we introduced a method to use the AOG to decide which experts to protect.

    \item Across Fast-WAM, ImageWAM, and LingBot-VA, Q-WAM preserves near-bf16 success on RoboTwin 2.0 with substantially reduced memory. Evaluation on Unitree G1 and bimanual UR3 demonstrates improved physical task success over SVDQuant, while ablations examine subspace protection, weight quantization, and expert selection.
    
\end{itemize}

\section{Related Works}
\label{related_works}

\paragraph{World Action Models}
World Action Models couple action generation with prediction of future visual states, using world modeling to learn representations for control. UWM~\citep{uwm} couples video and action diffusion, WorldVLA~\citep{worldvla} unifies autoregressive image and action generation, and DreamZero~\citep{dreamzero} and LingBot-VA~\citep{lingbot-va} jointly model video and actions for manipulation. Motus~\citep{motus} combines understanding, video, and action experts in a Mixture-of-Transformers, and Cosmos Policy~\citep{cosmospolicy} adds actions, future observations, and values to a pretrained video model's latent diffusion. Recent work reduces deployment cost: Fast-WAM~\citep{fastwam} and GigaWorld-Policy~\citep{gigaworldpolicy} keep video prediction only as training supervision, ImageWAM~\citep{imagewam} conditions actions on image-editing representations, and Flash-WAM~\citep{flashwam} distills sampling steps.

\paragraph{Post-Training Quantization}
PTQ reaches W4A4 in language models by smoothing~\citep{smoothquant} or rotating~\citep{quarot2024,spinquant} activation outliers, and in diffusion models by handling timestep-dependent activations~\citep{qdiffusion,ptq4dit2024,vidit-q} or absorbing outliers into a 16-bit low-rank branch~\citep{svdquant}, whose subspace is chosen from a local statistic, the weight's singular vectors. For robot policies, QVLA~\citep{qvla} and ActQuant~\citep{actquant} use action sensitivity but allocate precision to channels or whole tensors, while QuantVLA~\citep{quantvla} calibrates scales for the action head. Q-WAM instead selects the protected subspace by the sensitivity of the generated action through the full denoising process.

\paragraph{Concurrent Work}
QuantWAMs~\citep{quantwam} studies WAM quantization through shared-basis outlier calibration, joint video--action saliency for layer-wise precision allocation, and rollout-guided denoising-step protection. Q-WAM focuses on protecting dense activation subspaces selected from the sensitivity of the final generated action through the unrolled sampler.

\section{Preliminaries}
\label{preliminaries}

\paragraph{World Action Models}
\label{sec:prelim-wam}
At each control step $t$, a policy receives an observation $o_t$, made of camera views and, for
some policies, the robot's proprioceptive state, together with a language instruction $l$. It
outputs an action chunk $a=a_{t:t+h}\in\mathbb{R}^{m}$ of $h$ future steps of a $d_a$-dimensional
control, with $m=h\,d_a$. A World Action Model produces this chunk through a world model. The
observation and the instruction are encoded once into a cached conditioning $\z(o_t,l)$, which an
action head reads at every denoising step. Some WAMs implement the world model and
the action head as a Mixture-of-Transformers~\citep{mot2024}, in which per-modality experts hold
separate weights and interact only through joint attention. Others share a single backbone between
video and action~\citep{lingbot-va}. The action head is trained by flow
matching~\citep{flowmatching2023}. At inference it starts from $a^{(0)}\sim\mathcal{N}(0,I)$ and
integrates the learned velocity field $v_\theta$ over $T$ Euler steps,
\begin{equation}
\label{eq:denoise}
a^{(s+1)}=a^{(s)}+\eta_s\,v_\theta\!\left(a^{(s)},\tau_s,\z\right),\qquad s=0,\dots,T-1,
\end{equation}
and returns $a=a^{(T)}$. Each step is one transformer pass over the $h$ action tokens, one per future step.

\paragraph{Post-Training Quantization}
\label{sec:prelim-ptq}
A linear layer $\ell$ maps an activation $x_\ell\in\mathbb{R}^{d_\ell}$ to $y_\ell=W_\ell^\top x_\ell$
with $W_\ell\in\mathbb{R}^{d_\ell\times d'_\ell}$. Given a tensor $X$, $b$-bit
quantization maps $X$ onto a scaled integer grid,
\begin{equation}
\label{eq:quant}
Q_b(X)=\varsigma\cdot\operatorname{round}\!\left(X/\varsigma\right),\qquad
\varsigma=\max|X|\,/\,(2^{b-1}-1),
\end{equation}
with one scale $\varsigma$ per group of consecutive entries. Weights are quantized once offline and
activations per token at run time, and with 4-bit weights and activations the layer computes
$Q_4(W_\ell)^\top Q_4(x_\ell)$ as one integer general matrix multiplication (GEMM). The largest entry
of a group sets $\varsigma$, so a single outlier coarsens the grid for the whole group, and
activation outliers reach one to two orders of magnitude above typical values and move with the
input. We therefore apply two exact transforms before quantization.
SmoothQuant~\citep{smoothquant} migrates outlier magnitude from the activation to the weight with a
per-channel factor $\gamma\in\mathbb{R}^{d_\ell}$, and a block Hadamard rotation
$H$~\citep{quarot2024} spreads the remaining outliers across coordinates,
\begin{equation}
\label{eq:pair}
\xt=H\diag(\gamma)^{-1}x,\qquad \Wt=H\diag(\gamma)\,W,\qquad \Wt^\top\xt=W^\top x.
\end{equation}
A WAM block generates its normalization parameters at run time, so we apply $H$ online with the fast
Walsh--Hadamard transform~\citep{fino1976}. The deployed layer computes $Q_4(\Wt)^\top Q_4(\xt)$, and
Section~\ref{sec:method} works in these transformed coordinates.

\section{Method}
\label{method}
\label{sec:method}
We propose Q-WAM, a post-training W4A4 quantization method for World Action Models that allocates
activation precision by the sensitivity of the generated action. Section~\ref{sec:formulation}
defines and estimates this sensitivity, the Action Observability Gramian (AOG).
Section~\ref{sec:asp} introduces Action-Subspace Protection (ASP), which keeps the activation
directions that the AOG marks as most sensitive in 16 bits and quantizes the remaining weights and
activations to 4 bits. Section~\ref{sec:where} aggregates the AOG over the layers of each expert to
select the experts that receive this protection.

\begin{figure}[t]\centering
\includegraphics[width=\textwidth]{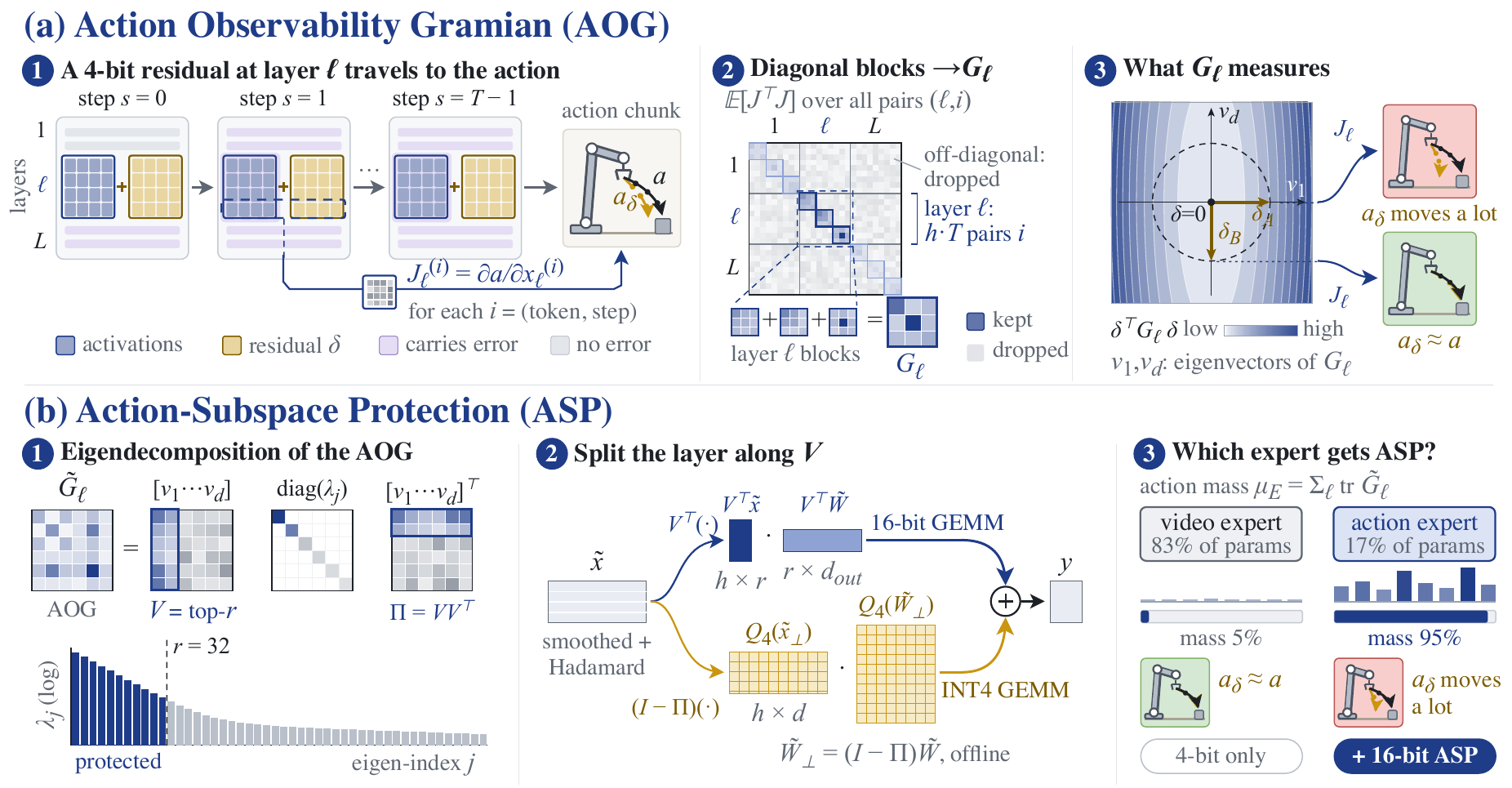}
\caption{\textbf{Overview of Q-WAM.} \textbf{(a)} The AOG $G_\ell$ measures how far a rounding error in each weighted combination of layer $\ell$'s input channels moves the action in later denoising steps. \textbf{(b)} ASP keeps the most sensitive of these combinations in a 16-bit branch and quantizes the deflated remainder to 4 bits; the action mass decides which experts receive it. }
\label{fig:overview}
\end{figure}

\subsection{The Action Observability Gramian}
\label{sec:formulation}
Within one action prediction, layer $\ell$ runs once for each of the $h$ action tokens at each of the
$T$ steps of Equation~\ref{eq:denoise}. We index these token--step pairs by $i\in\mathcal{I}_\ell$,
write $x^{(i)}_\ell$ for the input activation at pair $i$, and write
$\delta^{(i)}_\ell=Q_4(x^{(i)}_\ell)-x^{(i)}_\ell$ for its quantization residual. The residual changes
with the input, so we treat it as random vector over calibration inputs $(o_t,l)$. Let $\actionfun$ be the action chunk $a$ generated by the full-precision
model and $\actionfun_\delta$ the chunk generated when the activations carry the residuals
$\delta=(\delta^{(i)}_\ell)_{\ell,i}$. A first-order Taylor expansion gives
\begin{equation}
\label{eq:taylor}
\actionfun_\delta-\actionfun
=\sum_{\ell}\sum_{i\in\mathcal{I}_\ell}J^{(i)}_\ell\,\delta^{(i)}_\ell+O(\|\delta\|^2),
\qquad
J^{(i)}_\ell=\frac{\partial a}{\partial x^{(i)}_\ell}\in\mathbb{R}^{m\times d_\ell},
\end{equation}
The Jacobian $J^{(i)}_\ell$ says how the final action responds to a small change in the input of
layer $\ell$ at pair $i$. It accounts for everything that happens after that point: the rest of the
block, and all remaining denoising steps. Our goal is that the
quantized model generates the same action as the full-precision model, so we minimize the expected
squared change of the action caused by the rounding errors. Inserting Equation~\ref{eq:taylor} into
this squared change gives one term for every layer and token--step pair, plus cross terms between
different pairs. We treat the rounding errors as zero-mean, uncorrelated with each other, and
uncorrelated with the Jacobians (Appendix~\ref{app:blockdiag} states these approximations
precisely). The cross terms then vanish, and since all token--step pairs of a layer share one error
distribution, what remains is a sum with one term per layer:
\begin{equation}
\label{eq:obj}
\mathbb{E}\big\|\actionfun_\delta-\actionfun\big\|^2\;\approx\;\sum_{\ell}\mathbb{E}\big[\delta_\ell^\top G_\ell\,\delta_\ell\big],\qquad
G_\ell=\mathbb{E}\Big[\sum_{i\in\mathcal{I}_\ell}J^{(i)\top}_\ell J^{(i)}_\ell\Big]
\in\mathbb{R}^{d_\ell\times d_\ell}.
\end{equation}
We call $G_\ell$ the Action Observability Gramian (AOG) of layer $\ell$. Equation~\ref{eq:obj} says
what the AOG measures: if the input of layer $\ell$ carries a rounding error $\delta$, then
$\delta^\top G_\ell\,\delta$ approximates the expected squared change this error causes in the final
action. This
is useful for two reasons. First, it turns a question about the whole model, how far quantization
moves the robot's action, into one matrix per layer that we compute once during calibration.
Second, because $G_\ell$ is a matrix and not a single score, it tells how sensitive the action is to
errors in every direction of the layer's input, that is, in every weighted combination of its
channels. For each
layer, this quadratic form is also exact to second order: at the full-precision model, $G_\ell$ is the
Hessian of $\frac12\mathbb{E}\|\actionfun_\delta-\actionfun\|^2$ with respect to the layer's input
(Appendix~\ref{app:gn}).

\paragraph{Estimating the AOG $G_\ell$.}
Computing $G_\ell$ exactly is expensive: $J^{(i)}_\ell$ has one row per action coordinate, $hd_a$ in
total, and each row needs its own backward pass through all denoising steps. We avoid this with random probes $u\sim\mathcal{N}(0,I_m)$. A single
backward pass of $\langle a,u\rangle$ returns $J^{(i)\top}_\ell u$ for every layer and token--step pair
at once~\citep{baydin2018}, and since $\mathbb{E}[uu^\top]=I_m$, the expectation of
$(J^\top u)(J^\top u)^\top$ over probes equals $J^\top J$ for every Jacobian $J$. Averaging over $N$ calibration inputs with $P$
probes each therefore gives the unbiased estimate
\begin{equation}
\label{eq:ghat}
\widehat{G}_\ell=\frac{1}{NP}\sum_{n=1}^{N}\sum_{p=1}^{P}\sum_{i\in\mathcal{I}_\ell}
\big(J^{(i)\top}_{\ell,n}u_{np}\big)\big(J^{(i)\top}_{\ell,n}u_{np}\big)^\top,
\end{equation}
where $J^{(i)}_{\ell,n}$ is the Jacobian at input $n$. Each input then costs $P$ backward passes
instead of $hd_a$; we use $P{=}12$, about $37\times$ fewer than the exact computation on Fast-WAM. Appendix~\ref{app:probe} gives the procedure and bounds the error of the
trace and of the leading subspace of $\widehat{G}_\ell$.

\paragraph{The AOG predicts quantization damage.}
We check this prediction on Fast-WAM. For each of the linear layers in its action expert, we
quantize that layer alone to W4A4 and measure the normalized root-mean-square error (NRMSE) of the
generated action, which is the damage from quantizing the layer. Figure~\subfigref{fig:method}{a} compares
this damage with two per-layer criteria. Each point is one layer, placed by its damage and by its
score under a criterion, and each score is divided by its median over the layers so that both
criteria share one axis. The first is the local error $\|\delta_\ell\|^2$, the
squared quantization error of the layer input. Round-to-nearest minimizes it by construction, and
widely used post-training methods~\citep{awq,gptq,svdquant} calibrate
each layer or block from local quantities of this kind, the error or statistics of its own input and
output, without looking at the generated action. The second is the AOG prediction
$\mathbb{E}[\delta_\ell^\top G_\ell\delta_\ell]$. The AOG prediction tracks the damage with a
Pearson correlation $\rho{=}0.95$, while the local error reaches $\rho{=}0.04$.

\begin{figure}[t]\centering
\includegraphics[width=\textwidth]{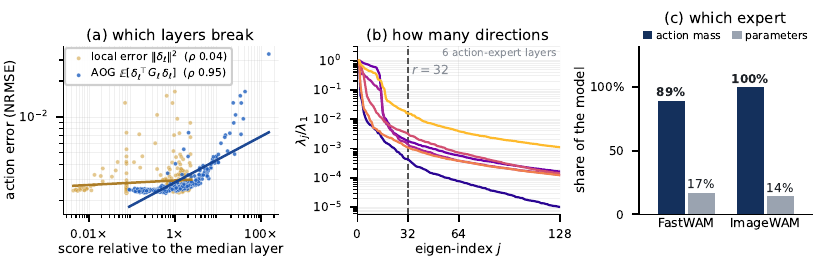}
\setlength{\abovecaptionskip}{2pt}%
\caption{\textbf{Deciding what to protect with the AOG.} \textbf{(a)} Action error from quantizing a
single layer, against the layer's local error and its AOG score.
\textbf{(b)} The first 128 eigenvalues of the rotated AOG for several action-expert layers, which fall
quickly; the dashed line marks the rank we keep. \textbf{(c)} The action expert's share of the
action mass and of the parameters in each model.}
\label{fig:method}
\end{figure}

\subsection{Action-Subspace Protection}
\label{sec:asp}
The AOG weighs each direction of the activation by its effect on the action. A residual
$\delta_\ell$ costs $\mathbb{E}[\delta_\ell^\top G_\ell\,\delta_\ell]$, and we keep the few most costly
directions in 16 bits. We find these directions in three steps. First, we apply the smoothing and
rotation of Section~\ref{sec:prelim-ptq}, so the layer quantizes $\xt=H\diag(\gamma)^{-1}x$ instead
of $x$. Second, we express the AOG in these coordinates. Since $x=\diag(\gamma)H\xt$, a rounding
error $\delta$ of $\xt$ shifts $x$ by $\diag(\gamma)H\delta$ and therefore costs
$\delta^\top\Gt_\ell\,\delta$, with $\Gt_\ell=H\diag(\gamma)\,G_\ell\,\diag(\gamma)H$. Third, since
$\Gt_\ell$ is symmetric and positive semidefinite, it has orthonormal eigenvectors $v_j$,
$\Gt_\ell=\sum_j\lambda_j v_jv_j^\top$, and
\begin{equation}
\label{eq:gtilde}
\delta^\top\Gt_\ell\,\delta=\sum_j\lambda_j\big(v_j^\top\delta\big)^2,\qquad \lambda_1\ge\lambda_2\ge\dots\ge0 .
\end{equation}
The cost splits into one term per eigenvector: the component $v_j^\top\delta$ of the residual contributes
$\lambda_j(v_j^\top\delta)^2$. A large eigenvalue marks a direction in which rounding changes the
action strongly, and a small eigenvalue marks a direction to which the action barely reacts. The
eigenvalues fall by orders of magnitude within a few dozen directions
(Figure~\subfigref{fig:method}{b}), so only a few directions of each activation are sensitive.
Each direction mixes many channels and is set by the action's response to errors, not by
activation values, so we protect directions rather than channels. We call the span of the top $r$ eigenvectors the layer's
action subspace, keep the component of $\xt$ inside it in 16 bits, and quantize the rest to 4 bits.

\paragraph{Splitting the layer.}
Let $V\in\mathbb{R}^{d_\ell\times r}$ hold the top-$r$ eigenvectors of $\Gt_\ell$, so that
$V^\top V=I_r$, and let $\Pi=VV^\top$ project onto the action subspace. We split every activation into
a protected component inside the action subspace and a remainder orthogonal to it,
\begin{equation}
\label{eq:xsplit}
\xt=\Pi\xt+(I-\Pi)\xt
=\underbrace{V\big(V^\top\xt\big)}_{\text{protected component}}
+\underbrace{\xt_\perp}_{\text{remainder}},\qquad \xt_\perp=(I-\Pi)\xt.
\end{equation}
Since $W^\top x=\Wt^\top\xt$, multiplying Equation~\ref{eq:xsplit} by $\Wt^\top$ splits the output of
the layer into two products. The first, $\Wt^\top V(V^\top\xt)=(V^\top\Wt)^\top(V^\top\xt)$, runs
over only $r\ll d_\ell$ dimensions. In the second, $I-\Pi$ is a symmetric projection, so it leaves
$\xt_\perp$ unchanged and can be moved onto the weight,
$\Wt^\top\xt_\perp=\Wt^\top(I-\Pi)\xt_\perp=\big((I-\Pi)\Wt\big)^\top\xt_\perp=\Wt_\perp^\top\xt_\perp$,
where $\Wt_\perp=(I-\Pi)\Wt$ is the weight with its protected component removed. We keep the first
product in 16 bits and quantize both factors of the second to 4 bits,
\begin{equation}
\label{eq:asp}
W^\top x=\Wt^\top\xt
=\big(V^\top\Wt\big)^{\top}\big(V^\top\xt\big)+\Wt_\perp^{\top}\xt_\perp
\approx\underbrace{\big(V^\top\Wt\big)^{\top}\big(V^\top\xt\big)}_{\text{low-rank 16-bit branch}}
+\underbrace{Q_4(\Wt_\perp)^{\top}Q_4(\xt_\perp)}_{\text{4-bit deflated path}}.
\end{equation}
We call this construction Action-Subspace Protection (ASP). Projecting $\xt$ before $Q_4$ keeps the
protected component out of the 4-bit grid, and deflating the weight to $\Wt_\perp$ lets the rounding
error $\delta$ reach the output only as $(I-\Pi)\delta$, up to the product of the weight and
activation rounding errors. Both paths are dense GEMMs that run efficiently on GPUs, and the thin rank-$r$ branch adds
little overhead (Appendix~\ref{app:overhead}).

By Equation~\ref{eq:gtilde}, the damage along $v_j$ depends on how strongly the action
reacts to that direction ($\lambda_j$) and how much rounding error lands on it. After the
rotation, the rounding error has roughly the same mean square $\sigma^2$ in every direction, i.e., it
is \emph{isotropic}, so the expected damage along $v_j$ is $\sigma^2\lambda_j$. The top-$r$ eigenvectors
are therefore the best $r$ directions to protect, which Appendix~\ref{app:kyfan} proves.

\subsection{Where to Protect: The action mass of an expert}
\label{sec:where}

Each protected layer carries a 16-bit branch, so in a Mixture-of-Transformers we protect only the
experts whose rounding errors damage the action most. We denote an expert by $E$, the set of layers
that process one modality, and score it by the expected damage its 4-bit rounding does to the
action. With the isotropic rounding error of Section~\ref{sec:asp}, every direction of a layer
receives error $\sigma^2$, so the layer's term in Equation~\ref{eq:obj} is $\sigma^2$ times the sum
of its eigenvalues, $\mathbb{E}[\delta_\ell^\top\Gt_\ell\delta_\ell]=\sigma^2\tr\Gt_\ell$. Taking
the same $\sigma^2$ for all layers and summing over the layers of $E$, the expert contributes
$\sigma^2\mu_E$, where
\begin{equation}
\label{eq:mass}
\mu_E=\sum_{\ell\in E}\tr\big(\Gt_\ell\big)
\end{equation}
is the action mass of $E$. Protecting a layer removes the damage along its top-$r$ eigenvectors,
which is the most that any rank-$r$ branch can remove; Appendix~\ref{app:kyfan} proves this. As
depicted in Figure~\subfigref{fig:method}{b}, the eigenvalues fall steeply, so the protected
directions carry nearly all of a layer's damage, and the gain from protecting an expert grows with
its action mass. ASP therefore protects the experts with the largest action mass per added 16-bit
value. As depicted in Figure~\subfigref{fig:method}{c}, on both Mixture-of-Transformers models the
action expert holds under a fifth of the parameters yet carries $89.3\%$ of the action mass on
Fast-WAM (Appendix~\ref{app:aogmap} visualize it per layer) and $99.99\%$ on ImageWAM, so ASP protects
the action expert alone. LingBot-VA shares one backbone, so ASP covers
all of its layers. Furthermore, table~\ref{tab:whereasp} shows that protecting the other expert
changes the success rate by less than the run-to-run RoboTwin variation.

\section{Experiments}
\label{sec:exp}

\begin{table}[t]\centering\small
\renewcommand{\arraystretch}{0.92}%
\caption{RoboTwin 2.0 results. BPW is the weight cost; Mem is the targeted-block memory.}
\label{tab:main}
\newcommand{\mainrows}{%
\toprule
Model & Method & Precision & BPW & Clean & Randomized & Avg. & Mem (GB) \\
\midrule
\multicolumn{8}{l}{\emph{Fast-WAM ($2$-expert MoT, video generation backbone)}}\\
 & bf16 (upper bound) & W16A16 & $16.00$ & $92.28$ & $91.34$ & $91.81$ & $11.85$ \\
 & SmoothQuant        & W4A4   & $4.00$  & $21.98$ & $16.48$ & $19.23$ & $2.98$ \\
 & ViDiT-Q            & W4A4   & $4.41$  & $82.00$ & $80.48$ & $81.24$ & $3.61$ \\
 & SVDQuant           & W4A4   & $4.84$  & $89.12$ & $87.38$ & $88.25$ & $3.60$ \\
\rowcolor{oursrow} & \textbf{Q-WAM (Ours)} & W4A4 & $\mathbf{4.62}$ & $\mathbf{91.62}$ & $\mathbf{89.94}$ & $\mathbf{90.78}$ & $3.44$ \\
\midrule
\multicolumn{8}{l}{\emph{ImageWAM ($2$-expert MoT, image editing backbone)}}\\
 & bf16 (upper bound) & W16A16 & $16.00$ & $92.82$ & $93.70$ & $93.26$ & $9.11$ \\
 & SmoothQuant        & W4A4   & $4.00$  & $59.04$ & $59.36$ & $59.20$ & $2.36$ \\
 & ViDiT-Q            & W4A4   & $4.76$  & $78.76$ & $82.08$ & $80.42$ & $3.05$ \\
 & SVDQuant           & W4A4   & $4.75$  & $84.34$ & $84.22$ & $84.28$ & $2.78$ \\
\rowcolor{oursrow} & \textbf{Q-WAM (Ours)} & W4A4 & $\mathbf{4.58}$ & $\mathbf{93.00}$ & $\mathbf{92.94}$ & $\mathbf{92.97}$ & $2.69$ \\
\midrule
\multicolumn{8}{l}{\emph{LingBot-VA (shared backbone, not an MoT)}}\\
 & bf16 (upper bound) & W16A16 & $16.00$ & $90.84$ & $90.20$ & $90.52$ & $10.16$ \\
 & SmoothQuant        & W4A4   & $4.00$  & $81.68$ & $72.24$ & $76.96$ & $2.80$ \\
 & ViDiT-Q            & W4A4   & $4.41$  & $80.82$ & $74.04$ & $77.43$ & $3.18$ \\
 & SVDQuant           & W4A4   & $4.76$  & $85.84$ & $83.12$ & $84.48$ & $3.27$ \\
\rowcolor{oursrow} & \textbf{Q-WAM (Ours)} & W4A4 & $\mathbf{4.76}$ & $\mathbf{90.20}$ & $\mathbf{88.92}$ & $\mathbf{89.56}$ & $3.27$ \\
\bottomrule}
\setlength{\tabcolsep}{0pt}%
\sbox0{\begin{tabular}{llcccccc}\mainrows\end{tabular}}%
\setlength{\tabcolsep}{\dimexpr(0.95\textwidth-\wd0)/16\relax}%
\begin{tabular}{llcccccc}\mainrows\end{tabular}
\end{table}

\paragraph{Experimental Setup} We evaluate Q-WAM on three WAM designs: Fast-WAM~\citep{fastwam} and ImageWAM~\citep{imagewam} add an action expert to a video-generation and an image-editing backbone in a Mixture-of-Transformers,
and the released open-sourced LingBot-VA~\citep{lingbot-va} uses one shared backbone for video and action. We use a protection rank of $r=32$ and group-wise W4A4
quantization with a group size of 32, retaining the protected branches in 16-bit precision. Calibration runs on NVIDIA H100 GPUs, simulation evaluation on NVIDIA L40S GPUs, and real-world inference on an NVIDIA RTX 5090. Implementation details are provided in Appendix~\ref{app1}. 

\paragraph{Benchmarks and Baselines}
We evaluate all three models on RoboTwin 2.0~\citep{robotwin} simulation under clean and randomized conditions, with 100 demonstrations. Real-world evaluation covers Fast-WAM and ImageWAM on five tasks across two embodiments (Figure~\ref{fig:realworld}). The Unitree G1 performs \emph{tool sorting} and \emph{item classification}, and a bimanual UR3 arm performs \emph{table cleaning}, \emph{cube stacking}, and \emph{drawer manipulation}, with 25 trials per task. We report task success rates and compare against bf16 policies, SmoothQuant~\citep{smoothquant}, ViDiT-Q~\citep{vidit-q}, and SVDQuant~\citep{svdquant} in simulation, and bf16 and SVDQuant on physical robots. Quantized methods use W4A4, with effective bits per weight (BPW) accounting for storage overhead.

\subsection{Main Results}

\begin{figure}[t]
    \centering
    \includegraphics[width=\linewidth]{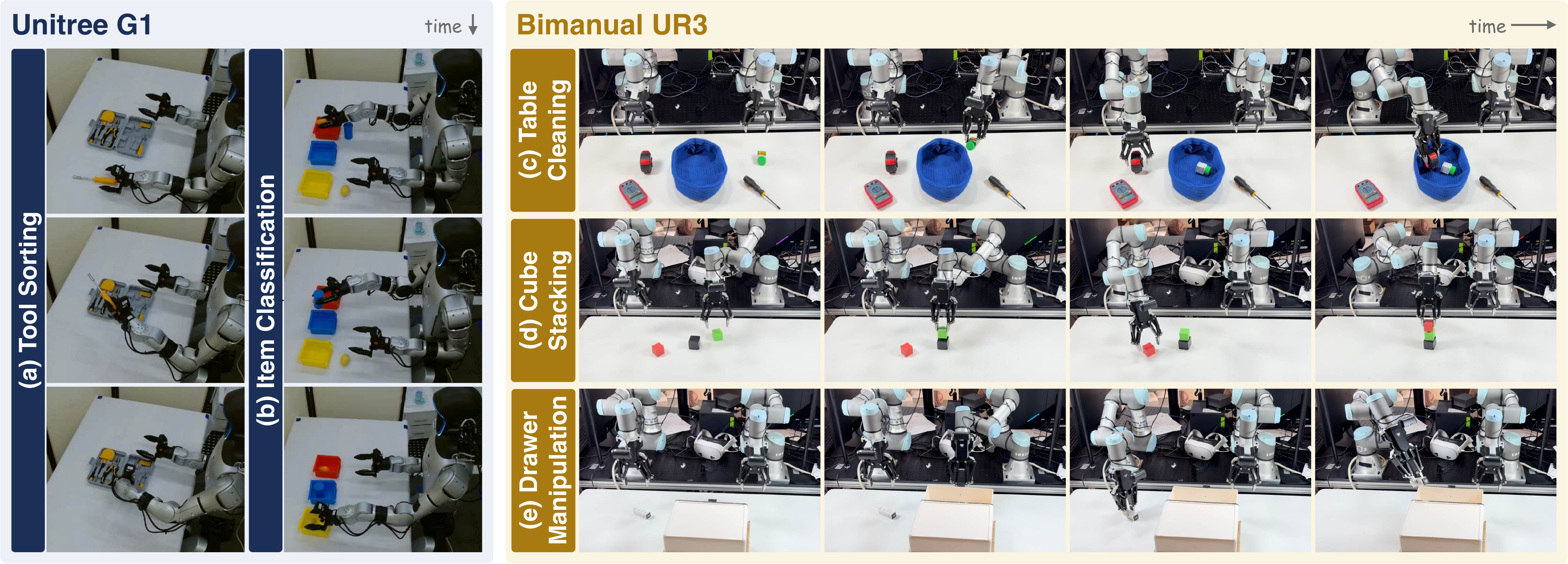}
    \setlength{\abovecaptionskip}{4pt}%
    \caption{Real-world evaluation tasks. Frames progress from left to right within each task.}
    \label{fig:realworld}
\end{figure}

\begin{table}[t]\centering\small
\caption{Real-world results (successes/trials) on two embodiments and five tasks.}
\label{tab:realworld}
\newcommand{\realworldrows}{%
\toprule
& & \multicolumn{2}{c}{Unitree G1} & \multicolumn{3}{c}{UR3 Bimanual} & \\
\cmidrule(lr){3-4} \cmidrule(lr){5-7}
Model & Method & Tool Sort & Item Cls. & Table\ Clean & Drawer & Cube Stacking & Avg. (\%) \\
\midrule
& bf16     & 19/25 & 13/25 & 17/25 & 20/25 & 12/25 & 64.8 \\
& SVDQuant & 16/25 & 10/25 & 12/25 & 12/25 & 4/25  & 43.2 \\
\rowcolor{oursrow}
\multirow{-3}{*}{\emph{Fast-WAM}}
& \textbf{Q-WAM (Ours)} & \textbf{18/25} & \textbf{11/25} & \textbf{14/25} & \textbf{19/25} & \textbf{8/25} & \textbf{56.0} \\
\midrule
& bf16     & 21/25 & 17/25 & 22/25 & 23/25 & 21/25 & 83.2 \\
& SVDQuant & 15/25 & 12/25 & 19/25 & \textbf{22/25} & 5/25 & 58.4 \\
\rowcolor{oursrow}
\multirow{-3}{*}{\emph{ImageWAM}}
& \textbf{Q-WAM (Ours)} & \textbf{19/25} & \textbf{14/25} & \textbf{21/25} & \textbf{22/25} & \textbf{19/25} & \textbf{76.0} \\
\bottomrule}
\setlength{\tabcolsep}{0pt}%
\sbox0{\begin{tabular}{llcccccc}\realworldrows\end{tabular}}%
\setlength{\tabcolsep}{\dimexpr(0.9\textwidth-\wd0)/16\relax}%
\begin{tabular}{llcccccc}\realworldrows\end{tabular}
\end{table}

\paragraph{Simulation Results}
Table~\ref{tab:main} shows that Q-WAM achieves the highest
success rates among the quantized methods in both clean
and randomized conditions across all three models.
Its average success reaches 90.78\% on Fast-WAM,
92.97\% on ImageWAM, and 89.56\% on LingBot-VA,
trailing the respective bf16 baselines by only 1.03,
0.29, and 0.96 percentage points.
Compared with SVDQuant, Q-WAM improves average success
by 2.53, 8.69, and 5.08 percentage points, respectively.
These improvements come at lower storage cost on the
two multi-expert models. BPW decreases from 4.84 to
4.62 on Fast-WAM and from 4.75 to 4.58 on ImageWAM.
On LingBot-VA, Q-WAM achieves its gain at the same
4.76 BPW and 3.27\,GB of memory as SVDQuant.
The improvements persist under randomized conditions,
where Q-WAM exceeds SVDQuant by 2.56, 8.72, and
5.80 points across the three models.
In comparison, SmoothQuant and ViDiT-Q exhibit
substantial model-dependent degradation, particularly
on ImageWAM, where their average success rates fall
to 59.20\% and 80.42\%, versus 92.97\% for Q-WAM.
Overall, Q-WAM preserves near-bf16 task performance
across both multi-expert and shared-backbone WAMs
while reducing the memory of the targeted blocks
by 68--71\% (Table~\ref{tab:main}). Figure~\ref{fig:latency} reports latency with real
4-bit kernels on one NVIDIA L40S: the quantized layers that process the action tokens run
$1.06\times$, $1.56\times$ and $1.41\times$ faster than bf16 on Fast-WAM, ImageWAM and LingBot-VA.
These layers handle short action chunks, so their cost is dominated by loading weights, which
4-bit storage reduces.

\paragraph{Real-World Experiments}
Table~\ref{tab:realworld} evaluates whether these gains
extend to physical execution across two embodiments.
Q-WAM achieves average success rates of 56.0\% on Fast-WAM
and 76.0\% on ImageWAM, exceeding SVDQuant by 12.8 and
17.6 percentage points, respectively. It matches or exceeds
SVDQuant on every model--task pair. The largest improvement
occurs on ImageWAM cube stacking, where Q-WAM succeeds in
19/25 trials, compared with 5/25 for SVDQuant and 21/25
for bf16. While gaps of 8.8 and 7.2 points to bf16 remain,
the results demonstrate improved closed-loop performance
under quantization on physical robots.

\begin{table}[t]\centering\small
\caption{Component ablation on RoboTwin 2.0 at a fixed weight group size.  }
\label{tab:ablation}
\setlength{\tabcolsep}{4pt}
\begin{tabular}{llccccc}
\toprule
Model & Method & BPW & Clean & Randomized & Avg. & Mem (GB) \\
\midrule
& bf16 (upper bound)                  & 16.00 & 92.28 & 91.34 & 91.81 & 11.85 \\
& per-group W4A4                      & 4.50  & 82.60 & 81.80 & 82.20 & 3.35 \\
& $+$ smoothing and rotation      & 4.50  & 87.78 & 87.32 & 87.55 & 3.35 \\
\rowcolor{oursrow}
\multirow{-4}{*}{\emph{Fast-WAM}}
& \textbf{$+$ ASP (Ours)}             & \textbf{4.62} & \textbf{91.62} & \textbf{89.94} & \textbf{90.78} & 3.44 \\
\midrule
& bf16 (upper bound)                  & 16.00 & 92.82 & 93.70 & 93.26 & 9.11 \\
& per-group W4A4                      & 4.50  & 21.16 & 20.14 & 20.65 & 2.64 \\
& $+$ smoothing and rotation      & 4.50  & 86.78 & 88.06 & 87.42 & 2.64 \\
\rowcolor{oursrow}
\multirow{-4}{*}{\emph{ImageWAM}}
& \textbf{$+$ ASP (Ours)}             & \textbf{4.58} & \textbf{93.00} & \textbf{92.94} & \textbf{92.97} & 2.69 \\
\midrule
& bf16 (upper bound)                  & 16.00 & 90.84 & 90.20 & 90.52 & 10.16 \\
& per-group W4A4                      & 4.50  & 83.44 & 78.24 & 80.84 & 3.11 \\
& $+$ smoothing and rotation      & 4.50  & 87.52 & 82.64 & 85.08 & 3.11 \\
\rowcolor{oursrow}
\multirow{-4}{*}{\emph{LingBot-VA}}
& \textbf{$+$ ASP (Ours)}             & \textbf{4.76} & \textbf{90.20} & \textbf{88.92} & \textbf{89.56} & 3.27 \\
\bottomrule
\end{tabular}
\end{table}

\begin{table}[t]
\begin{minipage}[t]{0.62\textwidth}\vspace{0pt}\centering\small
\caption{Adding ASP to the second expert on top of the action expert. $^\dagger$\,The expert Q-WAM
selects by action mass.}
\label{tab:whereasp}
\setlength{\tabcolsep}{2pt}
\begin{tabular}{llcccc}
\toprule
model & ASP applied to & BPW & clean & randomized & Avg. \\
\midrule
\rowcolor{oursrow}
Fast-WAM  & action expert$^\dagger$ & 4.62 & 91.62 & 89.94 & 90.78 \\
         & \quad$+$ video expert & 4.80 & 91.04 & 90.68 & 90.86 \\
\midrule
\rowcolor{oursrow}
ImageWAM & action expert$^\dagger$ & 4.58 & 93.00 & 92.94 & 92.97 \\
         & \quad$+$ editing expert         & 4.75 & 93.52 & 93.46 & 93.49 \\
\bottomrule
\end{tabular}
\end{minipage}\hfill
\begin{minipage}[t]{0.36\textwidth}\vspace{0pt}\centering
\expandafter\def\csname @captype\endcsname{figure}
\includegraphics[width=\linewidth]{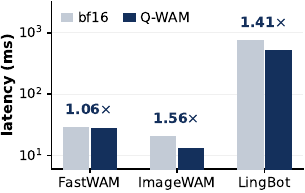}
\setlength{\abovecaptionskip}{2pt}%
\caption{\raggedright Block-level latency.}
\label{fig:latency}
\end{minipage}
\end{table}

\subsection{Ablation Analysis}
 Table~\ref{tab:ablation} adds smoothing and rotation, then ASP, to per-group W4A4 at a fixed group size. Smoothing and rotation raise average success by 5.35, 66.77, and 4.24 points on Fast-WAM, ImageWAM, and LingBot-VA without changing BPW, but still leave gaps of 4.26, 5.84, and 5.44 points to bf16. ASP recovers 3.23, 5.55, and 4.48 of these points (76--95\% of the remaining gap) under both clean and randomized conditions, for only 0.08--0.26 additional BPW (0.05--0.16\,GB). Removing outliers is therefore not enough; protecting the action-sensitive subspace is what brings W4A4 close to bf16. Appendix~\ref{app:weightside} shows that adding SVDQuant's low-rank weight protection on top of ASP costs extra bits without improving success. Moreover, Appendix~\ref{app:calib} provides a further ablation on the sensitivity to the calibration data.

 On the two multi-expert models, the results above apply ASP only to the action expert, which the action mass $\mu_E$ selects (Section~\ref{sec:where}). Table~\ref{tab:whereasp} tests whether the other expert (video or image editing) also needs it. Extending ASP to it adds 0.17--0.18 BPW, more than ASP on the action expert itself costs, yet changes average success by only $+0.08$ and $+0.52$, less than the $1.12$-point gap between two calibration sets (Appendix~\ref{app:calib}). The action expert alone therefore captures the benefit of ASP, and $\mu_E$ correctly identifies where protection is needed.

\section{Conclusion}
\label{conclusion}

In this work, we present Q-WAM, a post-training W4A4 quantization method for WAMs. To address the
gap between a layer's local error and its effect on the action, we propose the Action Observability
Gramian, which measures how much a rounding error in each weighted combination of a layer's input
channels moves the action. To address the action's sensitivity to a few of these combinations, we
propose Action-Subspace Protection, which keeps them in a small 16-bit branch and quantizes the rest
to 4 bits. On three WAMs, Q-WAM stays within $1.1$ points of bf16 on RoboTwin 2.0, reduces the
memory of the quantized blocks by $3.1$--$3.4\times$, and outperforms SVDQuant in simulation and on
two physical robots.

\subsection*{AI use statement}
In this work, we used large language models (LLMs) to implement some parts of the experiments.
The methodology and the experiments were not designed with generative AI tools, and the structure
of the paper was not generated by AI. Additionally, we used LLMs for grammar checking and polishing
the writing of the paper, and to find relevant papers for the literature review. We have reviewed
all AI-assisted work: two of the authors checked the LLM-assisted implementation, and the
literature review was also checked manually. To minimize the risk of factual inaccuracies or
citation errors, every model-edited sentence underwent human review, and all references were
carefully cross-checked with their primary sources. We take responsibility for the final content
of this work, including text, claims or artifacts produced with the aid of generative AI.

\bibliography{iclr2027_conference}
\bibliographystyle{iclr2027_conference}

\appendix

\section{Derivations for the Action Observability Gramian}
\label{app:metric}
Throughout, $\actionfun_\ell$ is the map from layer $\ell$'s input activation
$x_\ell\in\mathbb{R}^{d_\ell}$ to the generated action chunk $a\in\mathbb{R}^{m}$ through the
unrolled $T$-step denoiser of Equation~\ref{eq:denoise}, $J_\ell=\partial a/\partial x_\ell$ is
its Jacobian, and $G_\ell=\mathbb{E}[J_\ell^\top J_\ell]$ as in Equation~\ref{eq:obj}. The
expectation is over calibration inputs, as in Section~\ref{sec:formulation}, and the
token--step index $i$ is suppressed wherever it plays no role.

\subsection{$G_\ell$ is the Hessian of the action error at the full-precision point}
\label{app:gn}
For a fixed perturbation $\delta_\ell$ of layer $\ell$'s activation, define the action-error
objective
\begin{equation*}
F_\ell(\delta_\ell)=\tfrac12\,\mathbb{E}\big\|\xi(\delta_\ell)\big\|^2,\qquad
\xi(\delta_\ell)=\actionfun_\ell(x_\ell+\delta_\ell)-\actionfun_\ell(x_\ell)\in\mathbb{R}^{m},
\end{equation*}
so that $\xi(0)=0$. Differentiating twice with respect to $\delta_\ell$,
\begin{equation*}
\nabla^2F_\ell(\delta_\ell)=\mathbb{E}\Big[
\underbrace{J_\ell(\delta_\ell)^\top J_\ell(\delta_\ell)}_{\text{Gauss--Newton}}
+\underbrace{\textstyle\sum_{k=1}^{m} \xi_k(\delta_\ell)\,\nabla^2\actionfun_{\ell,k}(x_\ell+\delta_\ell)}_{\text{residual curvature}}
\Big],
\end{equation*}
where $J_\ell(\delta_\ell)$ is the Jacobian evaluated at $x_\ell+\delta_\ell$. The second term
is weighted by the components of $\xi$, which vanish at $\delta_\ell=0$, so
\begin{equation*}
\nabla^2F_\ell(0)=\mathbb{E}\big[J_\ell^\top J_\ell\big]=G_\ell.
\end{equation*}
Section~\ref{sec:formulation} uses two consequences. First, $G_\ell$ is the exact Hessian of the
action error at the full-precision point, so the quadratic form of Equation~\ref{eq:obj} needs
no assumption about training convergence, unlike loss-based PTQ, which has to argue that the
loss gradient vanishes. Second, $G_\ell\succeq 0$ as a sum of Gram matrices, which is what
licenses the eigendecomposition of Section~\ref{sec:asp}. If the action head is read as a
Gaussian likelihood with fixed isotropic covariance, $G_\ell$ is also its Fisher information up
to the covariance scale, the usual Gauss--Newton/Fisher
correspondence~\citep{schraudolph2002,martens2014,kunstner2019}. In either reading it is built
from the model's own output and never from ground-truth actions.

\subsection{The block-diagonal reduction and the cross terms it discards}
\label{app:blockdiag}
Section~\ref{sec:formulation} reduces the model-wide curvature $G$ to one term per layer.
Write $\Delta=\mathbb{E}\|\actionfun_\delta-\actionfun\|^2$ for the expected squared action change that
Equation~\ref{eq:obj} approximates. Inserting the first-order deviation
$J\delta=\sum_\ell\sum_{i\in\mathcal{I}_\ell}J^{(i)}_\ell\delta^{(i)}_\ell$ of Equation~\ref{eq:taylor}
and writing it out over the layer and token--step indices gives a double sum,
\begin{equation}
\label{eq:app-blockdiag}
\Delta\approx
\underbrace{\sum_{\ell,i}\mathbb{E}\Big[\delta^{(i)\top}_\ell J^{(i)\top}_\ell J^{(i)}_\ell\delta^{(i)}_\ell\Big]}_{\text{same }(\ell,i)\text{: kept}}
+\underbrace{\sum_{(\ell,i)\neq(\ell',i')}\mathbb{E}\Big[\delta^{(i)\top}_\ell J^{(i)\top}_\ell J^{(i')}_{\ell'}\delta^{(i')}_{\ell'}\Big]}_{\text{different }(\ell,i)\text{: dropped}}.
\end{equation}
The additive-noise model of Section~\ref{sec:formulation} takes the residuals to be zero-mean,
uncorrelated with the Jacobians, and uncorrelated with each other across different $(\ell,i)$.
Under it each cross term factors as
$\tr\big(\mathbb{E}[J^{(i)\top}_\ell J^{(i')}_{\ell'}]\,\mathbb{E}[\delta^{(i')}_{\ell'}\delta^{(i)\top}_\ell]\big)=0$,
and each kept term factors as $\tr\big(\mathbb{E}[J^{(i)\top}_\ell J^{(i)}_\ell]\,\Sigma_\ell\big)$
with $\Sigma_\ell=\mathbb{E}[\delta_\ell\delta_\ell^\top]$ the residual covariance shared by the
token--step pairs of layer $\ell$. Summing the pairs of each layer gives
$\Delta\approx\sum_\ell\tr(G_\ell\Sigma_\ell)=\sum_\ell\mathbb{E}[\delta_\ell^\top G_\ell\delta_\ell]$
with $G_\ell=\mathbb{E}[\sum_i J^{(i)\top}_\ell J^{(i)}_\ell]$, which is Equation~\ref{eq:obj}.

The model is approximate. Adjacent denoising steps see similar activations and can leave
correlated residuals. It is the block-diagonal reduction that K-FAC makes across
layers~\citep{kfac2015} and blockwise PTQ makes across blocks~\citep{brecq2021}, and
Figure~\subfigref{fig:method}{a} is its empirical check, since the kept terms alone predict the measured
single-layer W4A4 action damage at $\rho{=}0.95$ across all $300$ action-expert linears.

The reduction is applied to the curvature of the action map. Layer-wise PTQ
(Section~\ref{sec:prelim-ptq}) applies the same block-diagonal step and then also replaces the
objective by each layer's local output error, which removes all downstream information from the
curvature. The two steps are independent, and Q-WAM takes only the first.

\subsection{Optimality of the top eigenvectors}
\label{app:kyfan}
This appendix states and proves the result that Section~\ref{sec:asp} uses to choose the protected
subspace. Let $V\in\mathbb{R}^{d_\ell\times r}$ have orthonormal columns, let $\Pi=VV^\top$, and let
$\delta=Q_4(\xt_\perp)-\xt_\perp$ be the residual of the 4-bit path in Equation~\ref{eq:asp}. The
deflated weight passes $\Wt^\top(I-\Pi)\delta$ to the output, so the rest of the network sees the
residual $(I-\Pi)\delta$. By Equation~\ref{eq:obj}, the expected per-layer distortion of the 4-bit
path is
\begin{equation*}
\mathbb{E}\big[\delta^\top(I-\Pi)\,\Gt_\ell\,(I-\Pi)\delta\big]
=\tr\big((I-\Pi)\Gt_\ell(I-\Pi)\,\Sigma_\delta\big),\qquad
\Sigma_\delta=\mathbb{E}[\delta\delta^\top].
\end{equation*}

\paragraph{Isotropic rounding error.}
The residual is isotropic when its covariance is a multiple of the identity,
$\Sigma_\delta=\sigma^2I$. Equivalently, the mean squared error along every unit direction $u$ is the same,
$\mathbb{E}[(u^\top\delta)^2]=\sigma^2$, so the rounding error has no preferred direction. The
block Hadamard rotation of Section~\ref{sec:prelim-ptq} spreads outlier channels over all
coordinates, which brings the rounding error close to this state, and Appendix~\ref{app:isotropy}
tests one consequence of it on Fast-WAM. For an isotropic residual, cyclicity of the trace and
$(I-\Pi)^2=I-\Pi$ give
\begin{equation}
\label{eq:aspobj}
\mathbb{E}\big[\delta^\top(I-\Pi)\,\Gt_\ell\,(I-\Pi)\delta\big]
=\sigma^2\tr\!\big(\Gt_\ell(I-\Pi)\big)
=\sigma^2\big(\tr\Gt_\ell-\tr V^\top\Gt_\ell V\big),
\end{equation}
where the last step uses $\tr(\Gt_\ell VV^\top)=\tr(V^\top\Gt_\ell V)$.

\begin{proposition}[Optimal protected subspace]
\label{thm:optimal}
Over all $V\in\mathbb{R}^{d_\ell\times r}$ with orthonormal columns, Equation~\ref{eq:aspobj} reaches its
minimum when the columns of $V$ span the top-$r$ eigenvectors of $\Gt_\ell$, and the minimum equals
$\sigma^2\sum_{j>r}\lambda_j$.
\end{proposition}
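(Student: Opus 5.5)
By Equation~\ref{eq:aspobj}, minimizing the distortion over orthonormal $V$ is the same as maximizing $\tr(V^\top\Gt_\ell V)$, because $\sigma^2\tr\Gt_\ell$ does not depend on $V$ and $\sigma^2>0$. This is exactly the Ky Fan maximum principle. I would prove it directly, working in the eigenbasis of $\Gt_\ell$, which exists because $\Gt_\ell$ is symmetric positive semidefinite (Appendix~\ref{app:gn}, together with the congruence by $H\diag(\gamma)$).

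\textbf{Step 1: reduce to a linear program.} Write $\Gt_\ell=\sum_j\lambda_j v_jv_j^\top$. Then
\[
\tr(V^\top\Gt_\ell V)=\sum_j\lambda_j w_j,\qquad w_j=\|V^\top v_j\|^2=v_j^\top\Pi v_j .
\]
The weights satisfy two constraints. First, $0\le w_j\le 1$, since $\Pi$ is an orthogonal projection and $v_j$ is a unit vector. Second, $\sum_j w_j=\tr\Pi=r$, since $\{v_j\}$ is an orthonormal basis and $\tr(VV^\top)=\tr(V^\top V)=r$.

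\textbf{Step 2: bound the linear program.} For $\lambda_1\ge\lambda_2\ge\dots$, the quantity $\sum_j\lambda_jw_j$ subject to these constraints is at most $\sum_{j\le r}\lambda_j$. To see this, compute
\[
\sum_{j\le r}\lambda_j-\sum_j\lambda_jw_j=\sum_{j\le r}\lambda_j(1-w_j)-\sum_{j>r}\lambda_jw_j .
\]
The first sum is at least $\lambda_r\sum_{j\le r}(1-w_j)$, because each $1-w_j\ge0$ and $\lambda_j\ge\lambda_r$ for $j\le r$. The second sum is at most $\lambda_r\sum_{j>r}w_j$, because $\lambda_j\le\lambda_r$ for $j>r$. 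The two coefficients agree: $\sum_{j\le r}(1-w_j)=r-\sum_{j\le r}w_j=\sum_{j>r}w_j$. So the difference is at least $\lambda_r\sum_{j>r}w_j-\lambda_r\sum_{j>r}w_j=0$.

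\textbf{Step 3: attainment and the minimum value.} Take $V=[v_1,\dots,v_r]$. Then $w_j=1$ for $j\le r$ and $w_j=0$ otherwise, so the bound is attained. Substituting into Equation~\ref{eq:aspobj} gives the minimum
\[
\sigma^2\Big(\sum_j\lambda_j-\sum_{j\le r}\lambda_j\Big)=\sigma^2\sum_{j>r}\lambda_j .
\]
The objective depends on $V$ only through $\Pi$, so any $V$ whose columns span the same subspace is also optimal.

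\textbf{Main obstacle.} The only real argument is Step 2, the pairing of the deficit on the top-$r$ indices with the excess on the remaining indices. A secondary point is ties: if $\lambda_r=\lambda_{r+1}$, the "top-$r$ eigenvectors" are not unique. In that case the statement should be read as saying that some choice of top-$r$ eigenvectors is optimal, and any such choice achieves the minimum.
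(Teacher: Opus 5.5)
Your proof is correct and follows the paper's route: drop the $V$-independent term $\sigma^2\tr\Gt_\ell$ and maximize $\tr(V^\top\Gt_\ell V)$ over $V$ with orthonormal columns. The only difference is that the paper cites the Ky Fan maximum principle for this step, while your Steps 1--2 prove it directly using the weights $w_j=\|V^\top v_j\|^2\in[0,1]$ with $\sum_j w_j=r$; that argument is valid and self-contained, and it needs only symmetry of $\Gt_\ell$, not positive semidefiniteness.
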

\begin{proof}
The term $\tr\Gt_\ell$ does not depend on $V$, so minimizing Equation~\ref{eq:aspobj} over $V$ is the
same as maximizing $\tr(V^\top\Gt_\ell V)$ over matrices with orthonormal columns. The Ky Fan maximum
principle~\citep{fan1949} states that for a symmetric matrix $A$ with eigenvalues
$\lambda_1\ge\dots\ge\lambda_d$,
\begin{equation*}
\max_{V^\top V=I_r}\tr\big(V^\top AV\big)=\sum_{j=1}^{r}\lambda_j,
\end{equation*}
attained when the columns of $V$ span an invariant subspace of the $r$ largest eigenvalues. The
matrix $\Gt_\ell=S^\top G_\ell S$ with $S=\diag(\gamma)H$ is symmetric, so the principle with
$A=\Gt_\ell$ gives the minimum distortion
$\sigma^2(\tr\Gt_\ell-\sum_{j\le r}\lambda_j)=\sigma^2\sum_{j>r}\lambda_j$, attained by the top-$r$
eigenspace.
\end{proof}
Without isotropy, the distortion $\tr\big((I-\Pi)\Gt_\ell(I-\Pi)\Sigma_\delta\big)$ depends on both
$\Gt_\ell$ and $\Sigma_\delta$, and the best subspace need not be the top eigenspace of $\Gt_\ell$.

\subsection{The probe estimator: procedure, cost and accuracy}
\label{app:probe}
This appendix details the estimator of Equation~\ref{eq:ghat}. Throughout,
$J_\ell\in\mathbb{R}^{m\times d_\ell}$ denotes the Jacobian of the action chunk $a\in\mathbb{R}^{m}$
with respect to one token--step activation of layer $\ell$, with the index $i$ suppressed.

\paragraph{Why probes.}
Reverse-mode differentiation computes, in one backward pass, the gradient of one scalar with respect
to every intermediate tensor. For the vector output $a$, one backward pass therefore computes the
vector--Jacobian product $J_\ell^\top w$ for one chosen weight vector $w\in\mathbb{R}^{m}$, by
differentiating the scalar $\langle a,w\rangle$. The exact Gram matrix runs $w$ through the $m$ unit
vectors $e_1,\dots,e_m$,
\begin{equation*}
J_\ell^\top J_\ell=\sum_{k=1}^{m}\big(J_\ell^\top e_k\big)\big(J_\ell^\top e_k\big)^\top ,
\end{equation*}
which costs $m$ backward passes per input, $m{=}448$ on RoboTwin. The probe estimator replaces the
sweep over unit vectors with random vectors $u\sim\mathcal{N}(0,I_m)$. Since
$\mathbb{E}_u[uu^\top]=I_m$,
\begin{equation*}
\mathbb{E}_u\big[(J_\ell^\top u)(J_\ell^\top u)^\top\big]=J_\ell^\top\,\mathbb{E}_u[uu^\top]\,J_\ell
=J_\ell^\top J_\ell ,
\end{equation*}
so each probe gives an unbiased rank-one estimate of the exact sum, and the $P$-probe average of
Equation~\ref{eq:ghat} is unbiased for every $P\ge1$. The number of probes controls only the
variance. The construction is the Hutchinson trace estimator~\citep{hutchinson1989} applied to a
Gram matrix instead of a trace. $G_\ell$ is itself an average over inputs, so probe noise and input
noise average down along the same sum, since $N$ inputs with $P$ probes each contribute $NP$ rank-one
terms for every token--step pair.

\paragraph{Procedure.}
For each calibration input $(o_n,l_n)$, the estimator runs three steps.
\begin{enumerate}[leftmargin=1.5em,itemsep=1pt,topsep=2pt]
\item Encode the observation and the instruction once into the cached conditioning $\z$, and fix
the initial noise $a^{(0)}$ of the sampler to one draw, so that the action chunk $a$ is a
deterministic function of the activations.
\item Run the $T$ Euler steps of Equation~\ref{eq:denoise} with gradients enabled, and record the
input activation $x^{(i)}_\ell$ of every quantized linear layer at every token--step pair $i$. Each
recorded activation is a separate node of the autograd graph, for the reason the Attribution
paragraph below gives.
\item For $p=1,\dots,P$, draw $u_{np}\sim\mathcal{N}(0,I_m)$ and backpropagate the scalar
$\langle a,u_{np}\rangle$ through the recorded graph. Read the gradient
$g^{(i)}_{\ell,np}=J^{(i)\top}_\ell u_{np}$ at every recorded activation, and add
$g^{(i)}_{\ell,np}\,g^{(i)\top}_{\ell,np}$ to a $d_\ell\times d_\ell$ accumulator $A_\ell$.
\end{enumerate}
The graph is kept between probes, so one forward pass serves all $P$ backward passes. After the last
input, $\widehat{G}_\ell=A_\ell/(NP)$, which equals Equation~\ref{eq:ghat}.

\paragraph{Cost.}
Per input, the estimator needs one forward pass and $P{=}12$ backward passes, against $m{=}448$
backward passes for exact Jacobians. Each backward pass returns the vector--Jacobian products of
every layer, so the cost does not grow with the number of layers. Each layer stores one
$d_\ell\times d_\ell$ accumulator, at most $4096\times4096$ on Fast-WAM, independent of $N$. With
$N{=}10{,}919$ inputs, the whole calibration costs $NP\approx1.3\times10^5$ backward passes and runs
once offline. Smoothing and rotation enter afterwards through Equation~\ref{eq:gtilde}, so changing
either transform requires no new calibration. Q-WAM reads three quantities from the estimate, and the
next three paragraphs give the guarantee for each.

\paragraph{Trace.}
$\tr\widehat{G}_\ell=\frac{1}{NP}\sum_{n,p}\|J_\ell(o_n)^\top u_{np}\|^2$ is Hutchinson's
estimator of $\tr(J_\ell^\top J_\ell)$~\citep{hutchinson1989}. For a fixed symmetric positive
semidefinite $A$ and Gaussian probes, the single-probe estimate $u^\top Au$ has variance
$2\|A\|_F^2$, so $NP$ probes give a relative standard deviation of
$\sqrt{2/NP}\,\|A\|_F/\tr A$, and $O(\varepsilon^{-2}\log(1/\eta))$ probes suffice for relative
error $\varepsilon$ at confidence $1-\eta$~\citep{avron2011,roosta2015}. A sharply decaying
spectrum, which Figure~\subfigref{fig:method}{b} shows, puts $\|A\|_F/\tr A$ near its maximum of one,
the worst case for this bound. Equation~\ref{eq:mass} uses the trace only to compare experts
whose masses differ by one to four orders of magnitude, so the estimator's error is far below
the margin it has to resolve, and variance reduction~\citep{hutchpp2021} is unnecessary.

\paragraph{Subspace.}
Collect the $P$ probes of frame $n$ into $U_n\in\mathbb{R}^{m\times P}$ and stack the $N$
Jacobians into $\mathbf{J}=[J_\ell(o_1);\dots;J_\ell(o_N)]\in\mathbb{R}^{Nm\times d_\ell}$, so
that $\mathbf{J}^\top\mathbf{J}/N$ is the empirical $G_\ell$. With the block-diagonal test
matrix $\Omega=\operatorname{blkdiag}(U_1,\dots,U_N)\in\mathbb{R}^{Nm\times NP}$,
\begin{equation*}
NP\,\widehat{G}_\ell=\sum_n J_\ell(o_n)^\top U_nU_n^\top J_\ell(o_n)
=\mathbf{J}^\top\Omega\,\Omega^\top\mathbf{J},
\end{equation*}
so $\widehat{G}_\ell$ is, up to scale, the Gram matrix of the sketch $\mathbf{J}^\top\Omega$.
This sketch is the randomized range finder~\citep{hmt2011} applied to $\mathbf{J}^\top$ with $NP$ test
vectors. The range finder recovers the dominant rank-$r$ singular subspace once the number of
test vectors exceeds $r$ by a small oversampling margin, with an error set by the trailing
singular values. The count that has to exceed $r$ is therefore
$NP=10{,}919\times12\approx1.3\times10^5$, against $d_\ell\approx10^3$, and not $P$ alone; $P$
sets only how much of each frame's Jacobian is seen. The guarantee of the range finder~\citep{hmt2011} is stated
for a dense Gaussian test matrix and ours is block-diagonal, so we also rely on the unbiasedness
of $\widehat{G}_\ell$ and on $NP\gg d_\ell$. Choosing $P<r$ per frame allocates a fixed
backward-pass budget across frames rather than within one, which is the right allocation
because $G_\ell$ averages over frames. Exact per-frame Jacobians would cost $m/P\approx37\times$
more per frame for a per-frame precision that the average then dilutes.

\paragraph{Basis.}
Gaussian probes have a rotationally invariant law, so the estimator's error is not tied to any
coordinate system of action space. Rademacher probes have lower variance for the trace and are
the usual choice when only a trace is wanted~\citep{avron2011}, but their law singles out the
coordinate axes, which is the wrong trade when the estimate is consumed as an eigenbasis.

\paragraph{Attribution.}
One implementation detail matters. When several layers consume the same tensor, as the $q$, $k$
and $v$ projections of one attention block do, the gradient stored on that tensor is the sum
over all its consumers, and reading it would credit each layer with the others' sensitivity. We
therefore clone each layer's input so that it is a distinct node in the autograd graph, and read
$J_\ell^\top u$ from the clone. An earlier version of our AOGs omitted this and was recomputed;
the shipped AOGs use per-consumer attribution.

\subsection{Checking the isotropic noise model}
\label{app:isotropy}
Proposition~\ref{thm:optimal} and the action mass of Equation~\ref{eq:mass} assume that the rotated
4-bit residual is isotropic, with no preferred direction. An isotropic residual places the fraction
$r/d_\ell$ of its expected squared norm $\mathbb{E}\|\delta\|^2$ inside any $r$-dimensional subspace. We test this consequence on twelve
action-expert layers of Fast-WAM, the ten linear layers of the first block and the query and key
projections of the second, and report medians. The residual of $Q_4$ places $3.1\%$ of its squared norm inside $\spn(V)$, which matches the chance rate $r/d_\ell$ and the share inside a random subspace of
the same rank. The same
$3.1\%$ of the squared norm carries $97\%$ of the action damage $\delta^\top\Gt_\ell\delta$, against $2.9\%$
for the random subspace, and deflation removes it at the output up to a second-order product of
rounding errors. Removing the protected component also shrinks the dynamic range that sets
$\varsigma$ in Equation~\ref{eq:quant}, by a median factor of $0.77$ against $0.99$ for the random
subspace, so the 4-bit residual contracts in every direction.

\section{Implementation Details}
\label{app1}

\subsection{Quantization settings and kernels}
\label{app:kernels}
All quantizers are symmetric and round to nearest, with one scale for each group of consecutive
input channels. For INT4 we use a group size of 32 for both weights and activations. Each weight
group stores 4-bit codes in $[-8,7]$, and each activation group is quantized at runtime, per token,
with a scale equal to its absolute maximum divided by $7$. On Blackwell GPUs we use NVFP4 instead,
which stores E2M1 values with an FP8 (E4M3) scale for every group of 16 elements, for both weights
and activations; the weights carry an additional FP32 scale per tensor.

Before the rotation, we apply SmoothQuant smoothing with the per-channel factor
\begin{equation}
s_j=\frac{\max|X_{j}|^{\alpha}}{\max|W_{:,j}|^{1-\alpha}},
\end{equation}
where $\max|X_{j}|$ is the absolute maximum of input channel $j$ over the calibration inputs and
$\max|W_{:,j}|$ is the absolute maximum of the matching weight column. We divide the activation by
$s$ and multiply the weight by $s$, so the layer output is unchanged before quantization. In the
RoboTwin experiments we use a smoothing factor of $\alpha=0.5$ for every layer of all three models.
The activation maxima are collected on 50 RoboTwin episodes, one per task. The block Hadamard
rotation uses the largest power of two that divides the input dimension, up to 1024, and the
protected subspace has rank $r=32$ with its branch kept in 16-bit precision.

The action-block latency reported in Section~\ref{sec:exp} is measured with 4-bit kernels
written in Triton.
A quantized layer runs two of them. The first applies the smoothing, the block Hadamard rotation,
the projection onto the protected subspace and the activation quantization, and the second
multiplies the packed 4-bit weights, adding the protected branch and the bias in its epilogue. The
weights stay packed at 4 bits throughout and are the exported codes of the evaluated checkpoint,
and the activations are quantized by the same rule as in evaluation; the fused implementation
differs from it only in the order in which it accumulates in floating point.  We time the quantized
layers of one action chunk since the ASP has been applied to that expert for the corresponding models. The
precisions are replayed as CUDA graphs with the same layer layout. Each reported value is the
median over repeated runs on one L40S GPU.

\subsection{Cost of the 16-bit branch}
\label{app:overhead}
\begin{wrapfigure}{r}{0.34\textwidth}
\vspace{-\baselineskip}
\centering
\includegraphics[width=\linewidth]{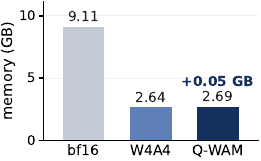}
\caption{\raggedright ImageWAM's memory comparison for the ASP overhead.}
\label{fig:overhead}
\vspace{-\baselineskip}
\end{wrapfigure}
The 16-bit branch of ASP adds little to the 4-bit model. Figure~\ref{fig:overhead} compares the
memory of ImageWAM's quantized blocks: $9.11$\,GB in bf16, $2.64$\,GB for the 4-bit model without the
branch (smoothing, rotation and group-wise W4A4), and $2.69$\,GB for Q-WAM. The branch therefore
adds $0.05$\,GB, $1.9\%$ of the 4-bit model, while the model as a whole stays $3.4\times$ smaller
than bf16. Across the three models, the branch adds $0.05$--$0.16$\,GB and $0.08$--$0.26$ bits per
weight (Table~\ref{tab:ablation}). Its computation is small as well: both paths are dense matrix
multiplications, and with the branch included the quantized action-token layers still run
$1.06$--$1.56\times$ faster than bf16 (Figure~\ref{fig:latency}).

\subsection{Real-world setup}
\label{app:realsetup}
\begin{figure}
    \centering
   \includegraphics[width=0.9\linewidth]{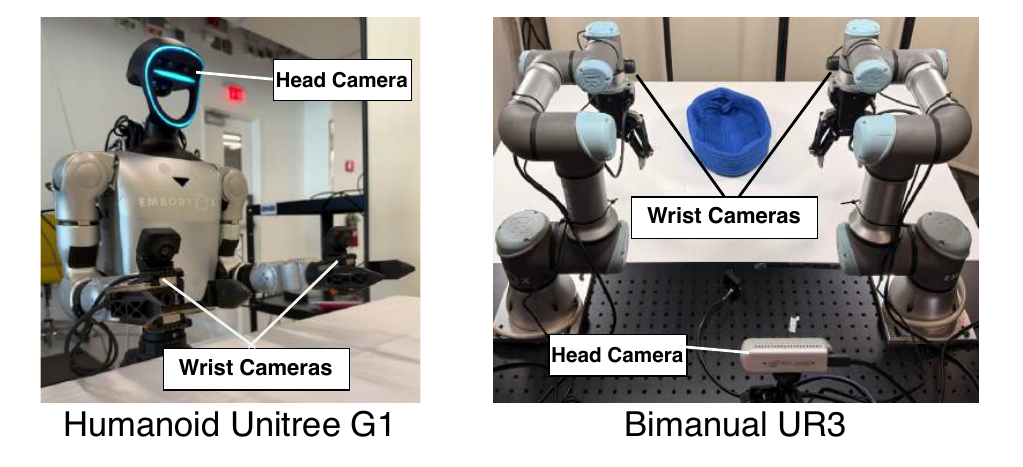}
    \caption{Real-world experiment setup.}
    \label{fig:placeholder}
\end{figure}
Figure~\ref{fig:placeholder} shows the two embodiments of the real-world experiments. The first is a
Unitree G1 humanoid with parallel-jaw grippers, and the second is a bimanual UR3 robot. Each
embodiment carries four cameras. Table~\ref{tab:realworld} lists the tasks run on each.

\section{Supplementary Experiments}
\label{app:5}

\subsection{SVDQuant as the weight quantizer}
\label{app:weightside}
SVDQuant protects weights rather than activations. It keeps a rank-32 16-bit branch built from the
leading singular vectors of the smoothed weight and quantizes the remainder, which works well on
image diffusion models. Table~\ref{tab:weightside} tests whether this weight protection
adds to Q-WAM. ASP is held fixed on the activation side, and the weights are quantized either
group-wise to INT4, as in Q-WAM, or with SVDQuant's decomposition. The SVDQuant branch costs
0.25 to 0.33 additional BPW and 0.14 to 0.24\,GB of memory, but it does not improve the success
rate. The average changes by $+0.21$ points on Fast-WAM and $+0.14$ points on LingBot-VA, both
smaller than the variation between calibration sets (Appendix~\ref{app:calib}), and it
falls by $2.61$ points on ImageWAM. SVDQuant selects the protected directions by the singular values
of the weight, a criterion that ignores how the action depends on each layer. In a WAM, this
dependence decides which quantization errors matter. Figure~\subfigref{fig:method}{a} shows that
per-layer reconstruction error does not predict the damage to the action, whereas the AOG does.
Therefore, the weight directions that SVDQuant keeps in 16 bits are not the ones the action
depends on, and group-wise INT4 is the cheaper choice for the weights of Q-WAM.

\begin{table}[t]\centering\small
\caption{Q-WAM with SVDQuant as the weight quantizer. ASP is held fixed on the activation
side and only the weight side varies, between group-wise INT4 and SVDQuant's low-rank
decomposition.}
\label{tab:weightside}
\begin{tabular}{lllccccc}
\toprule
model & activation & weight & BPW & clean & randomized & Avg. & Mem (GB) \\
\midrule
Fast-WAM    & ASP & group INT4 & \textbf{4.62} & 91.62 & 89.94 & 90.78 & \textbf{3.44} \\
           & ASP & SVDQuant   & 4.95 & 91.44 & 90.54 & 90.99 & 3.68 \\
\midrule
ImageWAM   & ASP & group INT4 & \textbf{4.58} & 93.00 & 92.94 & 92.97 & \textbf{2.69} \\
           & ASP & SVDQuant   & 4.83 & 90.56 & 90.16 & 90.36 & 2.83 \\
\midrule
LingBot-VA & ASP & group INT4 & \textbf{4.76} & 90.20 & 88.92 & 89.56 & \textbf{3.27} \\
           & ASP & SVDQuant   & 5.03 & 90.52 & 88.88 & 89.70 & 3.43 \\
\bottomrule
\end{tabular}
\end{table}

\subsection{Where the action mass sits within each expert}
\label{app:aogmap}

Section~\ref{sec:where} scores whole experts by their action mass $\mu_E$
(Equation~\ref{eq:mass}). Figure~\ref{fig:aogmap} resolves the same quantity to single layers on
Fast-WAM. Each cell is the trace $\tr(\Gt_\ell)$ of one linear layer as a share of the model's
total, arranged by layer type and block, and both experts share one color scale. The action expert
holds $89.3\%$ of the total. Within it the mass concentrates in the cross-attention output
projections, which carry $57.2\%$ of the expert's mass, and cross-attention as a whole carries
$82.0\%$ against $2.0\%$ for self-attention. Across the $300$ action layers the trace spans more
than six orders of magnitude. The video expert holds the remaining $10.7\%$ and spreads it more
evenly, led by the second feed-forward projection at $32.1\%$ and the cross-attention output
projection at $25.5\%$. Eight layers of the video expert's last block carry exactly zero mass,
because no later block reads their outputs; the action expert reads that block only through its
self-attention key and value projections, which are the two layers of the block that carry mass.

\begin{figure}[ht]
\centering
\includegraphics[width=\textwidth]{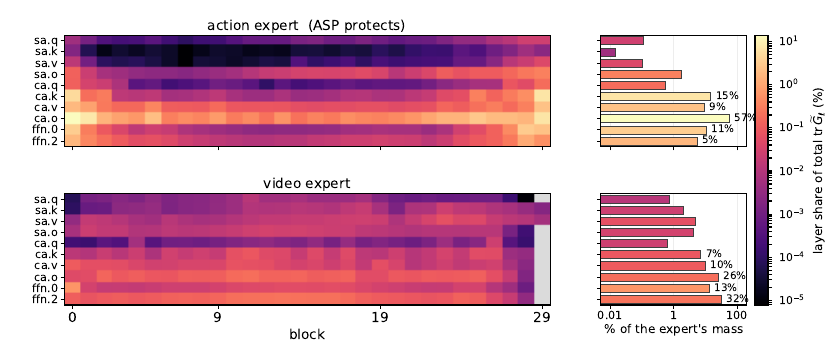}
\caption{Share of Fast-WAM's action mass carried by each linear layer, for the action expert (top)
and the video expert (bottom) on one logarithmic color scale. Rows are layer types and columns are
blocks. The bars on the right give each layer type's share of its own expert's mass. Gray cells
have exactly zero mass.}
\label{fig:aogmap}
\end{figure}

\subsection{Sensitivity to the calibration set}
\label{app:calib}
The AOG and the protected subspaces are estimated from 50 calibration episodes, one per RoboTwin task.
Table~\ref{tab:calib} measures how much the result depends on which episodes are drawn. On ImageWAM, we
draw three further sets of 50 episodes with random seeds 7, 13 and 21, in addition to the original set,
which was drawn with seed 42. For each of the four sets, we recompute the AOG and the rank-32 protected subspaces, export a new checkpoint, and
evaluate it on all 50 tasks with 100 episodes per task under both the clean and the randomized
conditions.

The four checkpoints reach an average success rate of $92.41\%$ with a standard deviation of $0.49$
points. The largest gap between two calibration sets is $1.12$ points, between seeds 7 and 13.
The protected subspaces also barely change between sets. Any two sets
share $97$--$99\%$ of their rank-32 subspace, measured as $\|V_1^\top V_2\|_F^2/r$ and averaged over
layers, while a random subspace of the same rank shares about $2\%$. The protected subspace, and with it
the success rate, is therefore insensitive to the choice of calibration episodes.

\begin{table}[h]\centering\small
\caption{Sensitivity of ImageWAM to the calibration set. Each row recomputes the AOG, the protected
subspaces and the checkpoint from a different set of calibration episodes, one per task, and evaluates
it on the full RoboTwin suite.}
\label{tab:calib}
\begin{tabular}{lccc}
\toprule
Calibration seed & Clean & Randomized & Avg. \\
\midrule
42 (original set) & 93.08 & 92.28 & 92.68 \\
7                 & 92.62 & 93.08 & 92.85 \\
13                & 91.48 & 91.98 & 91.73 \\
21                & 92.34 & 92.44 & 92.39 \\
\midrule
Mean $\pm$ std. & $92.38\pm0.67$ & $92.44\pm0.46$ & $92.41\pm0.49$ \\
\bottomrule
\end{tabular}
\end{table}

\subsection{Per-task RoboTwin results}
\label{app:3}
\label{app:pertask}
RoboTwin 2.0 has 50 tasks. Tables~\ref{tab:pertask-fastwam}, \ref{tab:pertask-imagewam}
and~\ref{tab:pertask-lingbot} extend Table~\ref{tab:main} task by task for Fast-WAM, ImageWAM and
LingBot-VA. They report the success rate of every method on each task under the clean and the
domain-randomized conditions.

\begin{table}[p]
\centering
\caption{Per-task success rates (\%) of Fast-WAM on RoboTwin 2.0 for every method in Table~\ref{tab:main}, under the clean (Clean) and domain-randomized (Rand.) conditions. The best quantized average is shown in bold.}
\label{tab:pertask-fastwam}
\footnotesize
\setlength{\tabcolsep}{2.4pt}
\begin{tabular}{lcccccccccc}
\toprule
 & \multicolumn{2}{c}{bf16} & \multicolumn{2}{c}{SmoothQuant} & \multicolumn{2}{c}{ViDiT-Q} & \multicolumn{2}{c}{SVDQuant} & \multicolumn{2}{c}{\textbf{Q-WAM}} \\
\cmidrule(lr){2-3} \cmidrule(lr){4-5} \cmidrule(lr){6-7} \cmidrule(lr){8-9} \cmidrule(lr){10-11}
Task & Clean & Rand. & Clean & Rand. & Clean & Rand. & Clean & Rand. & Clean & Rand. \\
\midrule
\texttt{adjust\_bottle} & 100 & 100 & 47 & 21 & 100 & 100 & 100 & 100 & 100 & 100 \\
\texttt{beat\_block\_hammer} & 97 & 98 & 18 & 10 & 94 & 94 & 98 & 94 & 100 & 96 \\
\texttt{blocks\_ranking\_rgb} & 100 & 100 & 1 & 0 & 96 & 97 & 100 & 99 & 100 & 100 \\
\texttt{blocks\_ranking\_size} & 95 & 93 & 1 & 0 & 83 & 90 & 89 & 91 & 90 & 92 \\
\texttt{click\_alarmclock} & 99 & 100 & 96 & 92 & 100 & 100 & 100 & 100 & 100 & 100 \\
\texttt{click\_bell} & 100 & 100 & 100 & 99 & 100 & 100 & 100 & 100 & 100 & 100 \\
\texttt{dump\_bin\_bigbin} & 98 & 96 & 35 & 20 & 85 & 87 & 88 & 87 & 96 & 98 \\
\texttt{grab\_roller} & 100 & 100 & 43 & 25 & 100 & 100 & 100 & 100 & 100 & 100 \\
\texttt{handover\_block} & 94 & 80 & 0 & 0 & 9 & 16 & 79 & 69 & 94 & 74 \\
\texttt{handover\_mic} & 99 & 100 & 0 & 0 & 72 & 67 & 92 & 91 & 100 & 97 \\
\texttt{hanging\_mug} & 57 & 58 & 5 & 1 & 44 & 46 & 51 & 49 & 64 & 54 \\
\texttt{lift\_pot} & 99 & 100 & 21 & 6 & 90 & 95 & 98 & 92 & 100 & 99 \\
\texttt{move\_can\_pot} & 91 & 91 & 4 & 0 & 79 & 70 & 94 & 92 & 85 & 88 \\
\texttt{move\_pillbottle\_pad} & 99 & 99 & 2 & 6 & 84 & 90 & 93 & 97 & 99 & 100 \\
\texttt{move\_playingcard\_away} & 100 & 100 & 22 & 12 & 97 & 99 & 100 & 100 & 100 & 100 \\
\texttt{move\_stapler\_pad} & 81 & 63 & 0 & 0 & 54 & 35 & 72 & 59 & 72 & 61 \\
\texttt{open\_laptop} & 98 & 100 & 63 & 53 & 98 & 97 & 98 & 98 & 98 & 99 \\
\texttt{open\_microwave} & 80 & 47 & 0 & 1 & 60 & 42 & 56 & 37 & 70 & 33 \\
\texttt{pick\_diverse\_bottles} & 83 & 86 & 7 & 2 & 82 & 78 & 91 & 90 & 84 & 83 \\
\texttt{pick\_dual\_bottles} & 100 & 96 & 10 & 3 & 95 & 94 & 99 & 98 & 100 & 98 \\
\texttt{place\_a2b\_left} & 95 & 92 & 22 & 10 & 91 & 91 & 93 & 92 & 96 & 96 \\
\texttt{place\_a2b\_right} & 97 & 94 & 26 & 15 & 92 & 91 & 96 & 95 & 97 & 88 \\
\texttt{place\_bread\_basket} & 93 & 94 & 28 & 22 & 90 & 84 & 90 & 92 & 94 & 95 \\
\texttt{place\_bread\_skillet} & 93 & 87 & 9 & 1 & 86 & 80 & 95 & 90 & 93 & 94 \\
\texttt{place\_burger\_fries} & 94 & 98 & 19 & 3 & 90 & 96 & 93 & 95 & 97 & 96 \\
\texttt{place\_can\_basket} & 66 & 65 & 1 & 2 & 62 & 56 & 70 & 66 & 65 & 70 \\
\texttt{place\_cans\_plasticbox} & 98 & 99 & 1 & 0 & 93 & 93 & 95 & 97 & 100 & 97 \\
\texttt{place\_container\_plate} & 96 & 94 & 38 & 33 & 95 & 97 & 99 & 97 & 98 & 100 \\
\texttt{place\_dual\_shoes} & 92 & 89 & 0 & 0 & 93 & 82 & 86 & 88 & 88 & 91 \\
\texttt{place\_empty\_cup} & 100 & 100 & 59 & 35 & 100 & 99 & 99 & 100 & 100 & 100 \\
\texttt{place\_fan} & 95 & 96 & 0 & 0 & 69 & 74 & 76 & 69 & 95 & 94 \\
\texttt{place\_mouse\_pad} & 88 & 89 & 6 & 3 & 74 & 75 & 82 & 81 & 84 & 83 \\
\texttt{place\_object\_basket} & 84 & 87 & 9 & 6 & 89 & 89 & 86 & 79 & 80 & 82 \\
\texttt{place\_object\_scale} & 88 & 93 & 7 & 1 & 86 & 85 & 88 & 89 & 87 & 92 \\
\texttt{place\_object\_stand} & 88 & 91 & 16 & 14 & 96 & 93 & 89 & 91 & 90 & 95 \\
\texttt{place\_phone\_stand} & 98 & 98 & 9 & 7 & 93 & 94 & 96 & 98 & 99 & 98 \\
\texttt{place\_shoe} & 96 & 99 & 20 & 10 & 86 & 96 & 95 & 97 & 95 & 97 \\
\texttt{press\_stapler} & 92 & 97 & 96 & 87 & 96 & 95 & 98 & 97 & 92 & 93 \\
\texttt{put\_bottles\_dustbin} & 91 & 90 & 0 & 0 & 29 & 33 & 78 & 73 & 88 & 82 \\
\texttt{put\_object\_cabinet} & 90 & 88 & 0 & 0 & 34 & 28 & 85 & 88 & 89 & 86 \\
\texttt{rotate\_qrcode} & 91 & 91 & 0 & 2 & 89 & 77 & 90 & 85 & 91 & 86 \\
\texttt{scan\_object} & 91 & 89 & 39 & 25 & 81 & 77 & 84 & 85 & 92 & 91 \\
\texttt{shake\_bottle} & 100 & 100 & 90 & 82 & 100 & 100 & 100 & 100 & 100 & 100 \\
\texttt{shake\_bottle\_horizontally} & 100 & 100 & 91 & 84 & 99 & 100 & 100 & 100 & 100 & 100 \\
\texttt{stack\_blocks\_three} & 96 & 95 & 0 & 0 & 67 & 80 & 94 & 91 & 95 & 96 \\
\texttt{stack\_blocks\_two} & 100 & 100 & 3 & 2 & 100 & 97 & 100 & 100 & 100 & 99 \\
\texttt{stack\_bowls\_three} & 85 & 90 & 0 & 0 & 78 & 66 & 76 & 66 & 81 & 83 \\
\texttt{stack\_bowls\_two} & 95 & 97 & 2 & 1 & 92 & 89 & 96 & 94 & 94 & 95 \\
\texttt{stamp\_seal} & 90 & 94 & 12 & 7 & 63 & 55 & 72 & 71 & 89 & 86 \\
\texttt{turn\_switch} & 62 & 64 & 21 & 21 & 65 & 55 & 57 & 60 & 60 & 60 \\
\midrule
Average & 92.28 & 91.34 & 21.98 & 16.48 & 82.00 & 80.48 & 89.12 & 87.38 & \textbf{91.62} & \textbf{89.94} \\
\bottomrule
\end{tabular}

\end{table}

\begin{table}[p]
\centering
\caption{Per-task success rates (\%) of ImageWAM on RoboTwin 2.0 for every method in Table~\ref{tab:main}, under the clean (Clean) and domain-randomized (Rand.) conditions. The best quantized average is shown in bold.}
\label{tab:pertask-imagewam}
\footnotesize
\setlength{\tabcolsep}{2.4pt}
\begin{tabular}{lcccccccccc}
\toprule
 & \multicolumn{2}{c}{bf16} & \multicolumn{2}{c}{SmoothQuant} & \multicolumn{2}{c}{ViDiT-Q} & \multicolumn{2}{c}{SVDQuant} & \multicolumn{2}{c}{\textbf{Q-WAM}} \\
\cmidrule(lr){2-3} \cmidrule(lr){4-5} \cmidrule(lr){6-7} \cmidrule(lr){8-9} \cmidrule(lr){10-11}
Task & Clean & Rand. & Clean & Rand. & Clean & Rand. & Clean & Rand. & Clean & Rand. \\
\midrule
\texttt{adjust\_bottle} & 100 & 100 & 96 & 97 & 99 & 99 & 100 & 100 & 100 & 100 \\
\texttt{beat\_block\_hammer} & 100 & 98 & 42 & 27 & 91 & 97 & 95 & 97 & 98 & 92 \\
\texttt{blocks\_ranking\_rgb} & 96 & 99 & 89 & 91 & 73 & 82 & 97 & 98 & 99 & 99 \\
\texttt{blocks\_ranking\_size} & 92 & 97 & 67 & 71 & 60 & 82 & 95 & 93 & 93 & 96 \\
\texttt{click\_alarmclock} & 100 & 100 & 100 & 100 & 100 & 99 & 100 & 99 & 99 & 100 \\
\texttt{click\_bell} & 100 & 100 & 100 & 100 & 100 & 100 & 99 & 100 & 100 & 100 \\
\texttt{dump\_bin\_bigbin} & 96 & 93 & 67 & 60 & 94 & 82 & 88 & 86 & 98 & 88 \\
\texttt{grab\_roller} & 100 & 100 & 100 & 100 & 100 & 100 & 100 & 100 & 100 & 100 \\
\texttt{handover\_block} & 94 & 96 & 17 & 8 & 97 & 91 & 64 & 64 & 96 & 95 \\
\texttt{handover\_mic} & 100 & 99 & 81 & 85 & 56 & 69 & 96 & 97 & 98 & 98 \\
\texttt{hanging\_mug} & 74 & 77 & 40 & 29 & 60 & 67 & 56 & 70 & 75 & 76 \\
\texttt{lift\_pot} & 100 & 100 & 35 & 37 & 83 & 89 & 95 & 90 & 100 & 100 \\
\texttt{move\_can\_pot} & 96 & 98 & 20 & 28 & 94 & 98 & 83 & 75 & 96 & 100 \\
\texttt{move\_pillbottle\_pad} & 99 & 100 & 20 & 25 & 92 & 89 & 81 & 85 & 97 & 99 \\
\texttt{move\_playingcard\_away} & 100 & 99 & 92 & 97 & 97 & 96 & 97 & 97 & 100 & 99 \\
\texttt{move\_stapler\_pad} & 66 & 67 & 16 & 9 & 36 & 44 & 42 & 34 & 51 & 65 \\
\texttt{open\_laptop} & 97 & 98 & 96 & 96 & 95 & 98 & 95 & 97 & 98 & 99 \\
\texttt{open\_microwave} & 97 & 94 & 27 & 29 & 13 & 14 & 6 & 22 & 82 & 73 \\
\texttt{pick\_diverse\_bottles} & 87 & 89 & 47 & 43 & 89 & 83 & 89 & 83 & 86 & 89 \\
\texttt{pick\_dual\_bottles} & 99 & 96 & 39 & 41 & 88 & 84 & 95 & 93 & 100 & 98 \\
\texttt{place\_a2b\_left} & 95 & 96 & 76 & 85 & 87 & 92 & 93 & 94 & 95 & 98 \\
\texttt{place\_a2b\_right} & 99 & 96 & 89 & 85 & 91 & 96 & 95 & 93 & 99 & 95 \\
\texttt{place\_bread\_basket} & 94 & 96 & 67 & 61 & 88 & 86 & 89 & 90 & 97 & 93 \\
\texttt{place\_bread\_skillet} & 91 & 90 & 59 & 55 & 76 & 86 & 91 & 86 & 93 & 94 \\
\texttt{place\_burger\_fries} & 100 & 99 & 78 & 78 & 91 & 91 & 91 & 90 & 100 & 99 \\
\texttt{place\_can\_basket} & 77 & 62 & 28 & 20 & 66 & 58 & 56 & 59 & 80 & 71 \\
\texttt{place\_cans\_plasticbox} & 100 & 97 & 48 & 38 & 80 & 87 & 84 & 75 & 98 & 97 \\
\texttt{place\_container\_plate} & 99 & 97 & 85 & 88 & 92 & 97 & 95 & 95 & 98 & 99 \\
\texttt{place\_dual\_shoes} & 77 & 86 & 28 & 40 & 25 & 33 & 65 & 69 & 71 & 83 \\
\texttt{place\_empty\_cup} & 100 & 100 & 84 & 87 & 100 & 100 & 99 & 100 & 100 & 100 \\
\texttt{place\_fan} & 95 & 96 & 26 & 34 & 54 & 65 & 69 & 68 & 93 & 95 \\
\texttt{place\_mouse\_pad} & 87 & 93 & 29 & 47 & 56 & 58 & 64 & 61 & 85 & 87 \\
\texttt{place\_object\_basket} & 78 & 82 & 68 & 63 & 80 & 75 & 82 & 81 & 82 & 87 \\
\texttt{place\_object\_scale} & 93 & 97 & 64 & 73 & 82 & 87 & 81 & 85 & 91 & 98 \\
\texttt{place\_object\_stand} & 100 & 98 & 75 & 73 & 95 & 90 & 92 & 96 & 95 & 95 \\
\texttt{place\_phone\_stand} & 100 & 99 & 86 & 90 & 100 & 99 & 88 & 96 & 100 & 98 \\
\texttt{place\_shoe} & 95 & 99 & 78 & 81 & 79 & 88 & 94 & 97 & 97 & 98 \\
\texttt{press\_stapler} & 95 & 100 & 98 & 98 & 100 & 97 & 97 & 100 & 96 & 100 \\
\texttt{put\_bottles\_dustbin} & 95 & 92 & 11 & 3 & 67 & 59 & 89 & 66 & 98 & 87 \\
\texttt{put\_object\_cabinet} & 88 & 95 & 20 & 16 & 63 & 79 & 70 & 73 & 86 & 87 \\
\texttt{rotate\_qrcode} & 86 & 86 & 53 & 47 & 80 & 83 & 85 & 79 & 90 & 86 \\
\texttt{scan\_object} & 93 & 89 & 26 & 22 & 70 & 80 & 86 & 88 & 91 & 95 \\
\texttt{shake\_bottle} & 100 & 100 & 93 & 97 & 100 & 100 & 99 & 99 & 100 & 100 \\
\texttt{shake\_bottle\_horizontally} & 100 & 100 & 93 & 98 & 100 & 100 & 100 & 99 & 100 & 100 \\
\texttt{stack\_blocks\_three} & 96 & 98 & 24 & 45 & 64 & 77 & 93 & 84 & 97 & 97 \\
\texttt{stack\_blocks\_two} & 100 & 100 & 85 & 74 & 86 & 81 & 99 & 98 & 100 & 100 \\
\texttt{stack\_bowls\_three} & 70 & 82 & 35 & 38 & 51 & 70 & 64 & 74 & 84 & 79 \\
\texttt{stack\_bowls\_two} & 92 & 97 & 66 & 72 & 82 & 88 & 90 & 93 & 95 & 97 \\
\texttt{stamp\_seal} & 78 & 83 & 21 & 14 & 41 & 56 & 69 & 68 & 92 & 90 \\
\texttt{turn\_switch} & 75 & 80 & 68 & 73 & 75 & 83 & 75 & 75 & 81 & 76 \\
\midrule
Average & 92.82 & 93.70 & 59.04 & 59.36 & 78.76 & 82.08 & 84.34 & 84.22 & \textbf{93.00} & \textbf{92.94} \\
\bottomrule
\end{tabular}
\end{table}

\begin{table}[p]
\centering
\caption{Per-task success rates (\%) of LingBot-VA on RoboTwin 2.0 for every method in Table~\ref{tab:main}, under the clean (Clean) and domain-randomized (Rand.) conditions. The best quantized average is shown in bold.}
\label{tab:pertask-lingbot}
\footnotesize
\setlength{\tabcolsep}{2.4pt}
\begin{tabular}{lcccccccccc}
\toprule
 & \multicolumn{2}{c}{bf16} & \multicolumn{2}{c}{SmoothQuant} & \multicolumn{2}{c}{ViDiT-Q} & \multicolumn{2}{c}{SVDQuant} & \multicolumn{2}{c}{\textbf{Q-WAM}} \\
\cmidrule(lr){2-3} \cmidrule(lr){4-5} \cmidrule(lr){6-7} \cmidrule(lr){8-9} \cmidrule(lr){10-11}
Task & Clean & Rand. & Clean & Rand. & Clean & Rand. & Clean & Rand. & Clean & Rand. \\
\midrule
\texttt{adjust\_bottle} & 96 & 96 & 96 & 90 & 95 & 95 & 100 & 96 & 98 & 94 \\
\texttt{beat\_block\_hammer} & 94 & 100 & 90 & 86 & 90 & 83 & 98 & 94 & 100 & 92 \\
\texttt{blocks\_ranking\_rgb} & 96 & 94 & 82 & 58 & 76 & 65 & 88 & 88 & 94 & 96 \\
\texttt{blocks\_ranking\_size} & 96 & 82 & 66 & 46 & 81 & 53 & 86 & 62 & 100 & 80 \\
\texttt{click\_alarmclock} & 100 & 100 & 100 & 98 & 100 & 100 & 100 & 100 & 100 & 100 \\
\texttt{click\_bell} & 100 & 100 & 100 & 100 & 100 & 98 & 100 & 100 & 100 & 100 \\
\texttt{dump\_bin\_bigbin} & 92 & 94 & 92 & 78 & 90 & 92 & 88 & 98 & 98 & 94 \\
\texttt{grab\_roller} & 100 & 100 & 100 & 100 & 97 & 97 & 100 & 100 & 100 & 100 \\
\texttt{handover\_block} & 100 & 90 & 58 & 36 & 56 & 44 & 88 & 80 & 100 & 90 \\
\texttt{handover\_mic} & 94 & 98 & 82 & 64 & 71 & 66 & 74 & 84 & 82 & 90 \\
\texttt{hanging\_mug} & 18 & 20 & 18 & 16 & 14 & 10 & 22 & 40 & 18 & 34 \\
\texttt{lift\_pot} & 100 & 100 & 98 & 92 & 97 & 92 & 98 & 88 & 100 & 100 \\
\texttt{move\_can\_pot} & 94 & 94 & 82 & 86 & 87 & 74 & 82 & 72 & 96 & 82 \\
\texttt{move\_pillbottle\_pad} & 100 & 100 & 92 & 82 & 96 & 89 & 96 & 86 & 100 & 100 \\
\texttt{move\_playingcard\_away} & 100 & 98 & 100 & 90 & 95 & 91 & 100 & 96 & 98 & 100 \\
\texttt{move\_stapler\_pad} & 68 & 70 & 48 & 28 & 40 & 21 & 50 & 42 & 62 & 66 \\
\texttt{open\_laptop} & 100 & 96 & 96 & 88 & 98 & 94 & 94 & 92 & 96 & 94 \\
\texttt{open\_microwave} & 46 & 90 & 20 & 34 & 26 & 71 & 22 & 48 & 28 & 76 \\
\texttt{pick\_diverse\_bottles} & 94 & 88 & 88 & 56 & 87 & 56 & 80 & 86 & 96 & 88 \\
\texttt{pick\_dual\_bottles} & 100 & 100 & 94 & 74 & 99 & 72 & 98 & 90 & 100 & 96 \\
\texttt{place\_a2b\_left} & 98 & 90 & 94 & 86 & 89 & 83 & 94 & 84 & 98 & 90 \\
\texttt{place\_a2b\_right} & 86 & 92 & 90 & 82 & 89 & 76 & 94 & 94 & 100 & 92 \\
\texttt{place\_bread\_basket} & 94 & 96 & 86 & 84 & 84 & 83 & 84 & 92 & 96 & 92 \\
\texttt{place\_bread\_skillet} & 98 & 90 & 90 & 78 & 83 & 74 & 94 & 86 & 90 & 90 \\
\texttt{place\_burger\_fries} & 98 & 98 & 94 & 78 & 80 & 73 & 98 & 98 & 94 & 100 \\
\texttt{place\_can\_basket} & 82 & 84 & 72 & 50 & 68 & 71 & 74 & 58 & 86 & 72 \\
\texttt{place\_cans\_plasticbox} & 100 & 98 & 84 & 68 & 96 & 81 & 92 & 88 & 100 & 96 \\
\texttt{place\_container\_plate} & 100 & 98 & 100 & 92 & 96 & 96 & 94 & 90 & 100 & 98 \\
\texttt{place\_dual\_shoes} & 94 & 88 & 58 & 50 & 41 & 34 & 74 & 78 & 92 & 86 \\
\texttt{place\_empty\_cup} & 100 & 100 & 100 & 100 & 99 & 100 & 100 & 100 & 100 & 100 \\
\texttt{place\_fan} & 90 & 88 & 72 & 52 & 70 & 62 & 70 & 72 & 100 & 84 \\
\texttt{place\_mouse\_pad} & 88 & 92 & 72 & 56 & 76 & 62 & 86 & 82 & 96 & 92 \\
\texttt{place\_object\_basket} & 90 & 76 & 80 & 60 & 84 & 66 & 82 & 86 & 84 & 78 \\
\texttt{place\_object\_scale} & 100 & 96 & 86 & 74 & 87 & 82 & 100 & 88 & 100 & 90 \\
\texttt{place\_object\_stand} & 98 & 92 & 92 & 86 & 98 & 83 & 96 & 80 & 100 & 80 \\
\texttt{place\_phone\_stand} & 98 & 94 & 96 & 86 & 87 & 86 & 98 & 90 & 98 & 94 \\
\texttt{place\_shoe} & 94 & 98 & 88 & 84 & 95 & 85 & 92 & 90 & 98 & 98 \\
\texttt{press\_stapler} & 78 & 84 & 94 & 96 & 97 & 93 & 100 & 96 & 84 & 90 \\
\texttt{put\_bottles\_dustbin} & 74 & 74 & 16 & 28 & 26 & 39 & 34 & 50 & 76 & 72 \\
\texttt{put\_object\_cabinet} & 92 & 76 & 52 & 54 & 54 & 45 & 84 & 58 & 74 & 80 \\
\texttt{rotate\_qrcode} & 92 & 90 & 90 & 76 & 84 & 82 & 94 & 80 & 96 & 90 \\
\texttt{scan\_object} & 96 & 86 & 72 & 74 & 71 & 73 & 96 & 80 & 86 & 92 \\
\texttt{shake\_bottle} & 98 & 100 & 100 & 96 & 99 & 99 & 96 & 96 & 100 & 100 \\
\texttt{shake\_bottle\_horizontally} & 100 & 100 & 98 & 94 & 100 & 99 & 100 & 100 & 100 & 100 \\
\texttt{stack\_blocks\_three} & 100 & 96 & 84 & 72 & 91 & 67 & 100 & 88 & 100 & 94 \\
\texttt{stack\_blocks\_two} & 100 & 100 & 96 & 90 & 100 & 92 & 98 & 98 & 98 & 100 \\
\texttt{stack\_bowls\_three} & 68 & 82 & 64 & 54 & 62 & 56 & 70 & 80 & 66 & 80 \\
\texttt{stack\_bowls\_two} & 96 & 92 & 92 & 82 & 89 & 90 & 92 & 92 & 96 & 88 \\
\texttt{stamp\_seal} & 98 & 90 & 84 & 60 & 84 & 47 & 82 & 68 & 98 & 94 \\
\texttt{turn\_switch} & 54 & 60 & 86 & 68 & 67 & 60 & 60 & 72 & 38 & 62 \\
\midrule
Average & 90.84 & 90.20 & 81.68 & 72.24 & 80.82 & 74.04 & 85.84 & 83.12 & \textbf{90.20} & \textbf{88.92} \\
\bottomrule
\end{tabular}

\end{table}

\end{document}